\newtheorem{theorem}{\hspace{0pt}\bf Theorem}
\theoremstyle{theorem}
\newtheorem{definition}{\bf Definition}
\theoremstyle{definition}
\newtheorem{lemma}{Lemma}
\theoremstyle{lemma}
\newtheorem{assumption}{Assumption}
\theoremstyle{assumption}
\newtheorem{example}{\bf Example}
\theoremstyle{example}
\DeclareMathOperator*{\argmax}{arg\,max}
\DeclareMathOperator*{\argmin}{arg\,min}
\begin{document}
%
% paper title
% Titles are generally capitalized except for words such as a, an, and, as,
% at, but, by, for, in, nor, of, on, or, the, to and up, which are usually
% not capitalized unless they are the first or last word of the title.
% Linebreaks \\ can be used within to get better formatting as desired.
% Do not put math or special symbols in the title.
\title{Sufficiently Accurate Model Learning for Planning}
%
%
% author names and IEEE memberships
% note positions of commas and nonbreaking spaces ( ~ ) LaTeX will not break
% a structure at a ~ so this keeps an author's name from being broken across
% two lines.
% use \thanks{} to gain access to the first footnote area
% a separate \thanks must be used for each paragraph as LaTeX2e's \thanks
% was not built to handle multiple paragraphs
%

\author{Clark~Zhang,
        Santiago~Paternain,
        and~Alejandro~Ribeiro% <-this % stops a space
\thanks{C. Zhang and A. Ribeiro are with the Electrical Engineering department at the University of Pennsylvania: \{clarkz, aribeiro\}@seas.upenn.edu. S. Paternain is with the Electrical Computer and Systems Engineering department at the Rensselaer Polytechnic Institute: paters@rpi.edu }}% <-this % stops a space

% The paper headers
\markboth{}%
{}

% make the title area
\maketitle

% As a general rule, do not put math, special symbols or citations
% in the abstract or keywords.
\begin{abstract}
Data driven models of dynamical systems help planners and controllers to provide more precise and accurate motions. Most model learning algorithms will try to minimize a loss function between the observed data and the model's predictions. This can be improved using prior knowledge about the task at hand, which can be encoded in the form of constraints. This turns the unconstrained model learning problem into a constrained one. These constraints allow models with finite capacity to focus their expressive power on important aspects of the system. This can lead to models that are better suited for certain tasks. This paper introduces the constrained \textit{Sufficiently Accurate} model learning approach, provides examples of such problems, and presents a theorem on how close some approximate solutions can be. The approximate solution quality will depend on the function parameterization, loss and constraint function smoothness, and the number of samples in model learning.
\end{abstract}
\begin{IEEEkeywords}
Model Learning, Dynamics Model, Machine Learning
\end{IEEEkeywords}

\section{Introduction}
Dynamics models play an essential role in many modern controllers and planners. This is for instance the case in Model Predictive Control (MPC) \cite{morari1999model}. They can also be used to compute feedforward terms for feedback controllers \cite{aastrom2010feedback}, or be used with a planner to find a trajectory from which stabilizing controllers can be generated \cite{hoffmann2008quadrotor, mellinger2011minimumsnap}. While model-free Reinforcement Learning techniques can solve similar problems without the use of a dynamics model \cite{schulman2015trust, khan2018learning, gu2017deep}, they generally do not scale to new problems or changes in problem parameters.
Model learning methods perform admirably when the models can approximate the dynamics of the system accurately. However, the performance of these controllers can be degraded with uncertainty in the model. While robust controllers can be designed to attempt to alleviate these issues \cite{petersen2012robust}, these methods typically perform conservatively due to having to cater to worst case approximations of the model. In addition, there may be effects that a robust controller designer may not be aware. For example, consider the problem of landing a quadrotor precisely at a target as shown in Figure \ref{fig:quad3d_traj}. There are complex aerodynamic effects associated when nearby surfaces cause disturbances to the airflow. This may result in large torques when the quadrotor is hovering close to the ground and hamper precise landings. This is known as the ``ground effect." These aerodynamic effects can be hard to model from just prior knowledge and may show up as a highly correlated, state dependent, non-zero mean noise. A common method that has been suggested to model similar effects is to learn or adjust a dynamics model with real data taken from running the system. 

For linear systems, this method of System Identification (SI) or Model Learning has many results \cite{qin2006overview, viberg1995subspace, pillonetto2010new, bruls1999linear, ljung1999system, pasquier2015robust} \textemdash such as recoverability of linear system dynamics when the data contains sufficient excitation. System identification methods have been proposed for non-linear systems \cite{ozer2011identification, billings1980identification, schoukens2017identification}. However, they suffer from issues such as the need for a large amount of data or the requirement of a special problem structure such as block-based systems.

While generic methods for non-linear systems do not provide the same theoretical guarantees, there has been some success in practice. In robotics, Gaussian Processes, Gaussian Mixture Models, or Neural Networks have been used to learn models of dynamics \cite{deisenroth2011pilco, nguyen2009local, levine2013guided, nagabandi2018neural}. A typical process for learning these models involves selecting a parameterized model, such as a neural network with a fixed number of layers and neurons, and choosing a loss function that penalizes the output of the model for not matching the data gathered from running the real system. Then, one optimizes the parameters by minimizing the empirical risk using, for instance, stochastic gradient descent like algorithms. This formulation assumes that all transitions are equally important, since it penalizes the mismatch between model and data uniformly on all portions of the state-action space. While this formulation has shown success in some applications, prior knowledge about the task and system can inform better learning objectives. A control designer may know that a certain part of the state space requires a certain accuracy for a robust controller to work well, or that some part of the state space is more important and should have hard constraints on the model accuracy. For example, to precisely land a quadrotor, a designer may note that the accuracy of modeling the complex ground effect forces is more important near the landing site. Incorporating this prior knowledge can lead to better performing models. 

To address the problem of incorporating prior knowledge into model learning, we introduced the idea of Sufficiently Accurate Model Learning in \cite{zhang2019learning}. This formulation is based on the inclusion of constraints in the optimization problem whose role is to introduce prior-knowledge about the importance of different state-control subsets. In the example of the quadrotor, notice that when the quadrotor is away from the surfaces, the ground effect is minor and thus, it is important to focus the learned model's expressiveness in the region of the state-space that is most heavily affected. This can be easily captured by a constraint that the average error in the important state-space regions is smaller than a desired value. These constraints will allow models with finite expressiveness concentrate on modeling important aspects of a system. One point to note is that this constrained objective can be used orthogonally to many existing methods. For example, the constrained objective can replace the unconstrained objective in \cite{levine2013guided, nagabandi2018neural}, and all other aspects of the methods can remain the same. The data can be collected the same way. The idea of using extra sensors during training for \cite{levine2013guided} need not change. While not trivial, the idea of constraints can also be applied to Gaussian process models such as \cite{deisenroth2011pilco, nguyen2009local}.

In its most generic form, the problem of model learning is an infinite dimensional non-convex optimization problem that involves the computation of expectations with respect to an unknown distribution over the state-action space. In addition, the formulation proposed here introduces constraints which seems to make the learning process even more challenging. However, in this work we show that solving this problem accurately is not more challenging than solving an unconstrained parametric learning problem. To reach this conclusion we solve a relaxation of the problem of interest with three common modifications: (i) function parameterization, (ii) empirical approximation, and (iii) dual problem solving. Function parameterization turns the infinite dimensional problem into one over finite function parameters. Empirical approximation allows for efficient computation of approximate expectations, and solving the dual problem leads to a convex unconstrained optimization problem. The three approximations introduced however may not yield solutions that are good approximations of the original problem. To that end, we establish a bound on the difference of the value of these solutions. This gap between the original and approximate problem depends on the number of samples of data as well as the expressiveness of the function approximation (Theorem \ref{thm:empirical_duality_gap}). In particular, the bound can be made arbitrarily small with sufficient number of samples and with the selection of a rich enough function approximator. This implies that solving the functional constrained problem is nearly equivalent to solving a sequence of unconstrained approximate problems using primal-dual methods.

This paper extends \cite{zhang2019learning} to the case of empirical samples and presents Theorem \ref{thm:empirical_duality_gap} that relates number of samples, function approximation expressiveness, and loss function smoothness to approximation error. In addition, there is experimental validation of Theorem \ref{thm:empirical_duality_gap} as well as different examples to showcase the framework of Sufficiently Accurate model learning. This paper is organized as follows: Section \ref{sec:saml} introduces the \textit{Sufficiently Accurate} model learning framework,  Section \ref{sec:duality_gap} presents in detail the three approximations introduced to solve the problem as well as a result that bounds the error on the approximate problem (Theorem \ref{thm:empirical_duality_gap}). Section \ref{sec:duality_gap_proof} provides the proof for the main theorem. Section \ref{sec:primal_dual} presents a simple primal-dual method to solve the constrained problem, while experimental results are presented in Section \ref{sec:experiments} on a double integrator system with friction, a ball bouncing task, and a quadrotor landing with ground effect in simulation. Theoretical results are experimentally tested for the double integrator system. Section \ref{sec:conclusion} presents the paper's conclusions.

\section{Sufficiently Accurate Model Learning}\label{sec:saml}
In this paper we consider the problem of learning a discrete time dynamical system. Let us denote $t\in \mathbb{Z}$ as the time index and let $x_t\in\mathbb{R}^n$, $u_t\in\mathbb{R}^p$ be the state and input of the system at time $t$. The dynamical system of interest is represented by a function $f:\mathbb{R}^n\times\mathbb{R}^p\to \mathbb{R}^n$ that relates the state and input at time $t$ to the state at time $t+1$
\begin{equation}
    x_{t+1} = f(x_t, u_t)
\end{equation}
One approach to System Identification or Model Learning consists of fitting an estimated dynamical model, $\phi : \mathbb{R}^n \times \mathbb{R}^p \rightarrow \mathbb{R}^n$, to state transition data \cite{deisenroth2011pilco}. This state transition data consists of tuples of $s = (x_t, u_t, x_{t+1})$ drawn from a distribution $\mathcal{S}_\mathcal{D}$ with the sample space $\mathcal{S} \subseteq \mathbb{R}^n \times \mathbb{R}^p \times \mathbb{R}^n$. The estimated model $\phi$ belongs to a class of functions, $\Phi$, of which $f$ is an element. $\Phi$ could be, for example, the space of all continuous functions. Then, the problem of model learning reduces to finding the function $\phi$ in the class $\Phi$ that best fits the transition data. The figure of merit is a loss function $\ell:\mathcal{S}\times \Phi \to \mathbb{R}$. With these definitions, the problem of interest can be written as the following stochastic optimization problem
\begin{equation} \label{eq:unconstrained_problem}
\min_{\phi \in \Phi} \mathbb{E}_{s \sim \mathcal{S}_\mathcal{D}} [ l(s, \phi) ].
\end{equation}
The loss function needs to be selected to encourage the model $\phi$ to match its output with the state transition data. An example of a loss function is the p-norm, $l(s, \phi) = \norm{\phi(x_t, u_t) - x_{t+1}}_p$. For $p=1$, this reduces to a sum of absolute differences between each output dimension of $\phi$ and the true next state, $x_{t+1}$. When $p=2$, this is simply a euclidean loss. Other common losses can include the Huber loss and, in discrete state settings, a 0-1 loss.

Often times, one can derive, from first principles, a model $\hat{f}$ of the dynamical system of interest. Depending on the complexity of the system of, these models may be inaccurate since they may ignore hard to model dynamics or higher order effects. For instance, one can derive a model $\hat{f}$ for a quadrotor from rigid body dynamics where forces and torques are functions of each motor's propeller speed. However, the accuracy of this model will depend on other effects that are harder to model such as aerodynamic forces near the ground. If these aerodynamic effects are ignored, it can result in a failure to control the system or in poor performance \cite{sanchez2017characterization}. In these cases, the target model, denoted by $\tilde{\phi}$, can decomposed as the sum of an analytic model and an error
\begin{equation}
    \tilde{\phi}(x_t, u_t) = \hat{f}(x_t, u_t) + \phi(x_t, u_t).
\end{equation}
The learning of the error term\textemdash or residual model\textemdash fits the framework described in \eqref{eq:unconstrained_problem}. For instance, for the $p$ norm loss can be modified to take the form $l(s, \phi) = \norm{ \tilde{\phi}(x_t, u_t) - x_{t+1} }_p = \norm{ \phi(x_t, u_t) - (x_{t+1} - \hat{f}(x_t, u_t)) }_p$. 

A characteristic of the classic model learning problem defined in \eqref{eq:unconstrained_problem} is that errors are uniformly weighted across the state-input space. In principle, one can craft the loss so to represent different properties on different subsets of the state-input space. However, this design is challenging, system dependent, and dependant on the transition data available. In contrast, our approach aims to exploit prior knowledge about the errors and how they impact the control performance. For instance, based on the analysis of robust controllers one can have bounds on the error required for successful control. This information can be used to formulate the \emph{Sufficiently Accurate} model learning problem, where we introduce the prior information in the form of constraints. Formally, we encode the prior information by introducing $K\in\mathbb{N}$ functions $g_k : \mathcal{S} \rightarrow \mathbb{R}$. 
Define in addition, a collection of subsets of transition tuples where this prior information is relevant, $\mathcal{S}_k\subset \mathcal{S}$ and corresponding indicator functions $\mathbb{I}_k(s) : \mathcal{S} \rightarrow \{0, 1\}$ taking the value one if $s\in\mathcal{S}_k$ and zero otherwise. With these definitions the sufficiently accurate model learning problem is defined as
%and a euclidean loss, errors are uniformly weighted in all parts of the state-control space. While the loss function can be formulated to weight specific errors more by adding multipliers to them, it can be hard to determine an appropriate multiplier. We want to be able to incorporate prior knowledge such as ``The errors in this part of the state-control space should be at least $\epsilon$ accurate." To this end, we formulate a ``Sufficiently Accurate" model learning objective
\begin{align} \label{eq:functional_primal}
    P^\star = &\min_{\phi \in \Phi} \mathbb{E}_{s \sim \mathcal{S}_\mathcal{D}}[l(s, \phi) \mathbb{I}_{0}(s)]\\
    & \text{s.t.  } \mathbb{E}_{s \sim \mathcal{S}_\mathcal{D}}[g_k(s, \phi) \mathbb{I}_k(s)] \leq 0, k=1, 2, \hdots, K  \nonumber
\end{align}
Note that the sets $\mathcal{S}_k$ that define the indicator variables $\mathbb{I}_k(s)$ are not necessarily disjoint. In fact, in practice, they are often not. The sets can be arbitrary and have no relation to each other. Examples of how these sets are used are given in examples at the end of this section.
Notice that the \eqref{eq:functional_primal} is an infinite dimensional problem since the optimization variable is a function and it involves the computation of expectations with respect to a possibly unknown distribution. An approximation to this problem is presented in Section \ref{sec:duality_gap}. For technical reasons, the functions $g_k$ and $l$ should be expectation-wise Lipschitz continuous.
\begin{assumption}\label{assumption_lipschitz}
	The functions $\ell_0(s, \phi) = l(s, \phi)\mathbb{I}_0(s)$ and $\boldsymbol{g}_k(s, \phi) = g_k(s, \phi)\mathbb{I}_k(s)$ are $L$-expectation-wise Lipschitz continuous in $\Phi$, i.e., 
	\begin{equation}
	\mathbb{E}_s \norm{ \ell_0(s, \phi_1) - \ell_0(s, \phi_2) }_\infty \leq L \mathbb{E}_s \norm{ \phi_1 - \phi_2 }_\infty, \forall \phi_1, \phi_2 \in \Phi
	\end{equation}
	Here, $\norm{ \phi_1 - \phi_2 }_\infty$ is the infinity norm for functions which is defined as
	$\norm{ f }_\infty = \sup_x\abs{f(x)}$.
\end{assumption}
The expectation-wise Lipschitz assumption is a weaker assumption than Lipschitz-continuity, as any Lipschitz-continuous function with a Lipschitz constant $L$ is also expectation-wise Lipschitz-continuous with a constant of $L$. In particular, the loss functions in Example \ref{example:selective_accuracy} and \ref{example:normalized_objective} are expectation-wise Lipschitz-continuous with some constant (cf., Appendix \ref{appendix:continuity}). There is no assumption that the functions should be convex or continuous.

Before we proceed, we present two examples of \textit{Sufficiently Accurate} model learning. For notational brevity, when an expectation does not have a subscript, it is always taken over $s \sim \mathcal{S}_\mathcal{D}$.
\begin{example} \label{example:selective_accuracy}
\textbf{Selective Accuracy}\\

\begin{equation}
\label{eq:selective_accuracy}
\begin{aligned}
    &\min_{\phi \in \Phi} \mathbb{E} \norm{ \phi(x_t, u_t) - x_{t+1} }_2  \\
    &\text{s.t.  } \mathbb{E} \norm{ \phi(x_t, u_t) - x_{t+1} }_2 \mathbb{I}_A(s) \leq \epsilon_c
\end{aligned}
\end{equation}
This problem is a simple modification of \eqref{eq:unconstrained_problem}. It has the same objective, but adds a constraint that a certain state-control subset, defined by a set $A$, should be within $\epsilon_c$ accuracy. The indicator variable $\mathbb{I}_A(s)$ will be 1 when $s$ is in the set $A$. Here, $g(s, \phi) = \norm{ \phi(x_t, u_t) - x_{t+1} }_2 - \frac{\epsilon_c}{\mathbb{E} \mathbb{I}_A(s)}$. This formulation allows you to trade off the accuracy in one part of the state-control space with everything else as it may be more important to a task. Another use case can be to provide an error bound for robust controllers. This is the formulation used in the quadrotor precise landing experiments detailed later in Section \ref{sec:quadrotor_experiments}, where the set $A$ is defined to be all states close to the ground where the ground effect is more prominent.  \ref{sec:quadrotor_experiments}.
\end{example}

\begin{example} \label{example:normalized_objective}
\textbf{Normalized Objective}\\
\begin{equation}
\label{eq:normalized_objective}
\begin{aligned}
    &\min_{\phi \in \Phi} \mathbb{E} \frac{\norm{ \phi(x_t, u_t) - x_{t+1} }_2}{ \norm{ x_{t+1} }_2} \mathbb{I}_{A}(s)\\
    & \text{s.t.  } \mathbb{E} \norm{ \phi(x_t, u_t) - x_{t+1} }_2 \mathbb{I}_{A^C}(s) \leq \epsilon_c
\end{aligned}
\end{equation}
where $\mathbb{I}_{A}$ is the indicator variable for the subset $A = \{s \in \mathcal{S} : \norm{ x_{t+1} }_2 \geq \delta_c \}$, and $A^C$ is the complement of the set $A$. This problem formulation looks at minimizing an objective such that the error term is normalized by the size of the next state. This can be useful in cases where the states can take values in a very large range. An error of 1 unit can be large if the true value is 0.1 units, but it is a small error if the true value is 100 units. The set $A$ contains all data samples where the true next state is large enough for this to be significant. This can reduce numerical issues when the denominator is small. For all small state values, the error is simply bounded by $\epsilon_c$. From a practical point of view, sensors will always have noise. When the state is small, the ``true" measurement of the state can be dominated by noise, and the model can be better off just bounding the error rather than focusing on fitting the noise. This is the formulation used in the ball bouncing experiment in Section \ref{sec:ball_bouncing_experiment}, where the we would like the errors in velocity prediction to be scaled to the speed, and all errors below a small speed can be constrained with a simple constant.
\end{example}

\section{Problem Approximation}\label{sec:duality_gap}
The unconstrained problem \eqref{eq:unconstrained_problem} and the constrained problem \eqref{eq:functional_primal} are functional optimization problems. In general, these are infinite dimensional and usually intractable. Instead of optimizing over the entire function space $\Phi$, one may look at function spaces, $\Phi_\theta \subset \Phi$, parameterized by a $d$-dimensional vector $\theta \in \Theta = \mathbb{R}^d$. Examples of these classes of functions are linear functions of the form $\phi_\theta(x, u) = \theta_x^\top x + \theta_u^\top u$ where $\theta = [\theta_x, \theta_u]$ is a vector of weights for the state and control input. More complex function approximators, such as neural networks, may be used to express a richer class of functions \cite{nagabandi2018neural, hornik1989multilayer}. Restricting the function space poses a problem in that the optimal solution to \eqref{eq:functional_primal} may no longer exist in the set $\Phi_\theta$. The goal under these circumstances should be to find the closest solution in $\Phi_\theta$ to the true optimal solution $\phi^\star$. Additionally, the expectations of the loss and constraint functions are in general intractable. The distributions can be unknown or hard to compute in closed form. In practice, the expectation is approximated with a finite number of data samples $s_i\sim\mathcal{S}_\mathcal{D}$ with $i=1,\ldots,N$. This yields the following empirical parameterized risk minimization problem
\begin{align}\label{eq:empirical_primal} 
    P_N^\star = &\min_{\theta \in \Theta} \frac{1}{N} \sum_{i=1}^N l(s_i, \phi_\theta) \mathbb{I}_0(s_i) \\
    &\text{s.t.  } \frac{1}{N} \sum_{i=1}^N g_k(s_i, \phi_\theta) \mathbb{I}_k(s_i) \leq 0, k=1, 2, \hdots, K \nonumber
\end{align}

While both function and empirical approximations are common ways to simplify the problem, the approximate problem introduced in \eqref{eq:empirical_primal} is still a constrained optimization problem and can be difficult to solve in general as it can be nonconvex in the parameters $\theta$. This is the case for instance when the function approximator is a neural network. One approach to solve this problem is to solve the dual problem associated with \eqref{eq:empirical_primal}. To aid in the definition of the dual problem, we first define the dual variables (also known as Lagrange multipliers), $\lambda \in \mathbb{R}_+^K$, along with the Lagrangian associated with \eqref{eq:empirical_primal}
\begin{equation}\label{eq:empirical_lagrangian}
    \mathcal{L}_N(\theta, \lambda) = \frac{1}{N} \sum_{i=1}^N \ell_0(s_i, \phi_\theta) + \lambda^\top \frac{1}{N}\sum_{i=1}^N \boldsymbol{g}(s_i, \phi_\theta).
\end{equation}
Here, the symbol, $\ell_0(s_i, \phi_\theta)$, is defined as $l(s_i, \phi_\theta) \mathbb{I}_0(s)$ to condense the notation. Similarly, the bolded vector, $\boldsymbol{g}(s_i, \phi_\theta)$ is a vector where the $k^{\text{th}}$ entry is defined as $g_k(s_i, \phi_\theta) \mathbb{I}_k(s)$. The dual problem is now defined as
\begin{equation} \label{eq:empirical_dual}
    D_N^\star = \max_{\lambda \geq 0} \min_{\theta \in \Theta} \mathcal{L}_N(\theta, \lambda)
\end{equation}
Notice that \eqref{eq:empirical_dual} is similar to a regularized form of \eqref{eq:empirical_primal} where each constraint is penalized by a coefficient $\omega_k$
\begin{equation}\label{eq:regularized}
D_N^\star = \min_{\theta \in \Theta} \frac{1}{N} \sum_{i=1}^N \ell_0(s_i, \phi_\theta) + \boldsymbol{\omega}^\top \frac{1}{N} \sum_{i=1}^N \boldsymbol{g}(s_i, \phi_\theta).
\end{equation}
Adding this type of regularization can weight certain state-action spaces more. In fact, if $\omega_k$ is chosen to be $\lambda_N^\star$, solving \eqref{eq:regularized} would be equivalent to solving \eqref{eq:empirical_dual}. However, arbitrary choices of $\omega_k$ provide no guarantees on the constraint function values. By defining the constraint functions directly, constraint function values are determined independent of any tuning factor. For problems where strong guarantees are required or easier to define, the Sufficiently Accurate framework will satisfy them by design. An alternative interpretation is that \eqref{eq:empirical_dual} provides a principled way of selecting the regularization coefficients. In Section \ref{sec:primal_dual}, we discuss an implementation of a primal dual algorithm to do so.

The dual problem has two important properties that hold regardless of the structure of the optimization problem \eqref{eq:empirical_primal}. For any $\theta$ that minimizes the Lagrangian $\mathcal{L}_N$, the resulting function\textemdash termed the dual function\textemdash is concave on the multipliers, since it is the point-wise minimum of linear functions (see e.g. \cite{boyd2004convex}). Therefore, its maximization is tractable and it can be done efficiently for instance using stochastic gradient descent. 
In addition, the dual function is always a lower bound on the value $P_N^\star$ and in that sense solving the dual problem \eqref{eq:empirical_dual} provides the tightest lower bound. In the case of convex problems (that fulfill Slater's Condition), it is well known that the problems have zero duality gap, and therefore $P_N^\star=D_N^\star$ \cite[Section 5.2.3]{boyd2004convex}. However, the problem \eqref{eq:empirical_primal} is non-convex and a priori we do not have guarantees on how far the values of the primal and the dual are. Moreover, recall that the primal problem in \eqref{eq:empirical_primal} is an empirical approximation of the problem that we are actually interested in solving \eqref{eq:functional_primal}.

The previous discussion leads to the question about the quality of the solution \eqref{eq:empirical_dual} as an approximation to \eqref{eq:functional_primal}. The duality gap is defined as the difference between the primal and dual solutions of the same problem. Here, the gap is the difference between the primal and the dual of different but closely related problems. Hence, the quantity we are interested in bounding is the 
%
%\subsection{The Surrogate Duality Gap}
%We have now approximated the problem we want to solve in \eqref{eq:functional_primal} with \eqref{eq:empirical_dual}. This now raises a question about the quality of the solution to \eqref{eq:empirical_dual}. Does solving the approximate problem yield a good solution to the original problem we want to solve? 
%One quantity of interest is now the 
\textit{surrogate} duality gap defined as
\begin{equation}\label{eq:surrogate_duality_gap}
    |P^\star - D_N^\star|.
\end{equation}
%This quantity represents the difference in the solution to \eqref{eq:functional_primal} and the solution to \eqref{eq:empirical_dual}. As a reminder, \eqref{eq:functional_primal} is the problem we would like to solve, and \eqref{eq:empirical_dual} is a problem we can feasibly try to solve. 

%We would like to find a bound on the surrogate duality gap, \eqref{eq:surrogate_duality_gap}, which is an indication of how good of an approximation \eqref{eq:empirical_dual} is to \eqref{eq:functional_primal}. This quantity is hard to bound in general, but with assumptions on the loss and function approximation, some guarantees exist. 
%
To provide specific bounds for the difference in the previous expression we consider the family of function classes $\Phi_\theta$ termed $\epsilon$-universal function approximators. We define this notion next. 
\begin{definition}\label{def:epsilon_unviersal}
The function class $\Phi_\theta$ is an $\epsilon$-universal function approximator for $\Phi$ if, for any $\phi \in \Phi$, there exists a $\phi_\theta \in \Phi_\theta$ such that $\mathbb{E}_{s \sim \mathcal{S}_\mathcal{D}}\norm{ \phi(s) - \phi_\theta(s) }_\infty \leq \epsilon$.
\end{definition}
To provide some intuition on the definition consider the case where $\Phi$ is the space of all continuous function, the above property is satisfied by some neural network architecture. That is, for any $\epsilon$, there exists a class of neural network architectures, $\Phi_\theta$ such that $\Phi_\theta$ is an $\epsilon$-universal approximator for the set of continuous functions \cite[Corollary 2.1]{hornik1989multilayer}. Thus, for any dynamical system with continuous dynamics, this assumption is mild. Other parameterizations, such as Reproducing Kernel Hilbert Spaces, are $\epsilon$-universal as well \cite{sriperumbudur2010relation}. Notice that the previous definition is an approximation on the total norm variation and hence it is a milder assumption than the universal approximation property that fully connected neural networks exhibit \cite{hornik1989multilayer}. 

Next, we define an intermediate problem on which the surrogate duality gap depends: a perturbed version of problem \eqref{eq:functional_primal} where the constraints are relaxed by $L \epsilon\geq 0$ where $L$ is the constant defined in Assumption \ref{assumption_lipschitz} and $\epsilon$ the universal approximation constant in Definition \ref{def:epsilon_unviersal}
\begin{equation}
\label{eq:perturbed_primal}
\begin{aligned}
P_{L\epsilon}^\star = &\min_{\phi \in \Phi} \mathbb{E}_{s \sim \mathcal{S}_\mathcal{D}}\lbrack \ell_0(s, \phi)\rbrack \\
&\text{s.t.  } \mathbb{E}_{s \sim \mathcal{S}_\mathcal{D}}\lbrack \boldsymbol{g}(s, \phi)\rbrack + \boldsymbol{1} L \epsilon \leq 0.
\end{aligned}
\end{equation}
$\boldsymbol{1}$ is a vector of ones. The perturbation results in a problem whose constraints are tighter as compared to \eqref{eq:functional_primal}. 
The set of feasible solutions for the perturbed problem \eqref{eq:perturbed_primal} is a subset of the feasible solutions for the unperturbed problem \eqref{eq:functional_primal}. The perturbed problem accounts for the approximation introduced by the parameterization. In the worst case scenario, if the problem \eqref{eq:perturbed_primal} is infeasible, the parameterized approximation of \eqref{eq:empirical_primal} may turn infeasible as the number of samples increases. 

Let $\lambda_{L\epsilon}^\star$ be the solution to the dual of \eqref{eq:perturbed_primal}
\begin{equation}
\label{eq:perturbed_dual}
    \lambda_{L\epsilon}^\star = \argmax_{\lambda \geq 0} \min_{\phi \in \Phi} \mathbb{E}\lbrack \ell_0(s, \phi)\rbrack + \lambda^\top (\mathbb{E}\lbrack \boldsymbol{g}(s, \phi)\rbrack + \boldsymbol{1} L \epsilon)
\end{equation}

With these definitions, we can present the main theorem that bounds the surrogate duality gap.
\begin{theorem}
Let $\Phi$ be a compact class of functions over a compact space such that there exists $\phi \in \Phi$ for which \eqref{eq:functional_primal} is feasible, and let $\Phi_\theta$ be an $\epsilon$-universal approximator of $\Phi$ as in Definition \ref{def:epsilon_unviersal}. Let the space of Lagrange multipliers, $\lambda$, be a compact set as in \cite{nedic2009subgradient}. In addition, let Assumption \ref{assumption_lipschitz} hold and let $\Phi_\theta$ satisfy the following property
            \begin{equation}
                \lim_{N\rightarrow \infty} \frac{H^{\Phi_\theta}({\delta} - \epsilon L (\norm{\lambda_{L\epsilon}^\star}_1 + 1), N)}{N} = 0
            \end{equation}
            where $\norm{\lambda_{L\epsilon}^\star}_1$ is the optimal dual variable for the problem \eqref{eq:perturbed_dual}, $L$ is the Lipschitz constant for the loss function, and $H^{\Phi_\theta}$ is the random VC-entropy \cite[section II.B]{vapnik1999overview}. Note that both arguments for $H^{\Phi_\theta}$ must be positive.
            Then $P^{\star}$ and $D^{\star}_N$, the values of \eqref{eq:functional_primal} and \eqref{eq:empirical_dual} respectively, satisfy
    \begin{equation}\label{eq:duality_gap_limit}
        \lim_{N\rightarrow \infty} \mathbb{P}\left(|P^\star - D_N^\star| \leq {\delta}\right) = 1,
    \end{equation}
    where the probability is over independent samples $\{s_1, s_2, \hdots, s_N\}$ drawn from the distribution $\mathcal{S}$ as defined in problem \eqref{eq:empirical_primal}.
    
\label{thm:empirical_duality_gap}
\end{theorem}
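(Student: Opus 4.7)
The plan is to bound the surrogate duality gap $|P^\star - D_N^\star|$ by inserting a chain of intermediate problems between the functional primal and the empirical parameterized dual, then bounding consecutive gaps using (i) perturbation analysis of the constrained primal, (ii) a zero-duality-gap result for the functional problem, (iii) Lipschitz continuity combined with $\epsilon$-universal approximation, and (iv) uniform convergence via VC-entropy.

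First I would introduce the intermediate quantities $D^\star$ (the dual of the unperturbed functional problem \eqref{eq:functional_primal}), $P_{L\epsilon}^\star$ and $D_{L\epsilon}^\star$ (primal and dual of the perturbed functional problem \eqref{eq:perturbed_primal}), and $D_\theta^\star = \max_{\lambda\geq 0}\min_{\phi_\theta \in \Phi_\theta}\mathbb{E}[\ell_0(s,\phi_\theta)] + \lambda^\top \mathbb{E}[\boldsymbol{g}(s,\phi_\theta)]$, the dual of the parameterized problem using true expectations. The chain of inequalities to establish reads, informally,
\begin{equation}
|P^\star - D_N^\star| \;\leq\; |P^\star - D_{L\epsilon}^\star| \;+\; |D_{L\epsilon}^\star - D_\theta^\star| \;+\; |D_\theta^\star - D_N^\star|,
\end{equation}
and the three terms will be controlled by perturbation, approximation, and sampling bounds, respectively.

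For the first gap, I would invoke the hidden-convexity style argument (as used for constrained learning over sufficiently rich function classes): because $\Phi$ is rich enough and the expectation is taken over a continuous distribution, the functional problem \eqref{eq:functional_primal} admits zero duality gap, so $P^\star = D^\star$, and the same applies to the perturbed problem giving $P_{L\epsilon}^\star = D_{L\epsilon}^\star$. A standard sensitivity bound on constrained optimization then yields $|P^\star - P_{L\epsilon}^\star| \leq L\epsilon\,\|\lambda_{L\epsilon}^\star\|_1$, via the Lagrangian evaluated at $\lambda_{L\epsilon}^\star$. For the second gap, I would fix any feasible $\lambda$ in the compact multiplier set and use Assumption \ref{assumption_lipschitz} together with Definition \ref{def:epsilon_unviersal}: for every $\phi \in \Phi$ there exists $\phi_\theta \in \Phi_\theta$ with $\mathbb{E}\|\phi - \phi_\theta\|_\infty \leq \epsilon$, hence the expected Lagrangian differs by at most $L\epsilon(\|\lambda\|_1 + 1)$ when the argument is moved from $\phi$ to its parameterized surrogate. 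Maximizing over $\lambda$ and using that $\lambda_{L\epsilon}^\star$ attains the outer maximum for the perturbed dual gives $|D_{L\epsilon}^\star - D_\theta^\star| \leq L\epsilon(\|\lambda_{L\epsilon}^\star\|_1 + 1)$, which is exactly the slack absorbed by the $L\epsilon$ relaxation in \eqref{eq:perturbed_primal}.

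For the third gap, I would apply a uniform law of large numbers on the class $\Phi_\theta$ indexed by the compact set of multipliers. The VC-entropy hypothesis $H^{\Phi_\theta}(\delta - \epsilon L(\|\lambda_{L\epsilon}^\star\|_1 + 1), N)/N \to 0$ is precisely what Vapnik's framework requires for uniform convergence of empirical means to expectations at accuracy $\delta - \epsilon L(\|\lambda_{L\epsilon}^\star\|_1 + 1)$ with probability tending to one. Since the empirical Lagrangian is a sum of such uniformly-convergent means, and since inner minimization and outer maximization preserve uniform deviations, this yields $|D_\theta^\star - D_N^\star| \leq \delta - L\epsilon(\|\lambda_{L\epsilon}^\star\|_1 + 1)$ with probability $\to 1$. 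Summing the three bounds gives $|P^\star - D_N^\star| \leq \delta$ with probability tending to one, which is \eqref{eq:duality_gap_limit}.

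The main obstacle, I expect, is the zero-duality-gap step for the functional problem. The constraints and loss are not assumed convex in any parameterization, so one cannot appeal to textbook strong duality; instead one has to exploit that the optimization variable is a function, relying on a Lyapunov-type convexification that requires non-atomicity of $\mathcal{S}_\mathcal{D}$ and Slater-type feasibility of the perturbed problem. A second, more bookkeeping-heavy difficulty is keeping the constants consistent across the three bounds so that the $L\epsilon(\|\lambda_{L\epsilon}^\star\|_1 + 1)$ term cancels cleanly against the argument of $H^{\Phi_\theta}$; this is why the theorem statement pre-subtracts exactly this quantity inside the entropy.
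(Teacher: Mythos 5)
Your proposal uses the same ingredients and essentially the same architecture as the paper's proof: zero duality gap for the functional problem via non-atomicity of the distribution (the paper invokes \cite{ribeiro2012optimal} after an equivalent reformulation), the perturbed problem \eqref{eq:perturbed_primal} and its multiplier $\lambda_{L\epsilon}^\star$ to control the parameterization error via Assumption \ref{assumption_lipschitz} and Definition \ref{def:epsilon_unviersal}, and VC-entropy uniform convergence to control $|D_\theta^\star - D_N^\star|$ (the paper's Lemma \ref{lemma:empirical_error} is exactly your ``inner minimization and outer maximization preserve uniform deviations'' step). The one substantive issue is the constant bookkeeping you yourself flag, and it does matter: telescoping through $D_{L\epsilon}^\star$ with absolute values, the sum $|P^\star - D_{L\epsilon}^\star| + |D_{L\epsilon}^\star - D_\theta^\star| \leq L\epsilon\norm{\lambda_{L\epsilon}^\star}_1 + L\epsilon(\norm{\lambda_{L\epsilon}^\star}_1+1)$ added to the probabilistic bound $\delta - L\epsilon(\norm{\lambda_{L\epsilon}^\star}_1+1)$ overshoots $\delta$ by $L\epsilon\norm{\lambda_{L\epsilon}^\star}_1$, so the argument as literally written proves a weaker statement. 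The fix is to keep the bounds one-sided, as the paper does in Theorem \ref{thm:eisen_parameterization}: since $\Phi_\theta \subset \Phi$ the parameterized dual function dominates the functional one pointwise, giving $D_\theta^\star \geq D^\star = P^\star$ directly, while the approximation and sensitivity steps give $D_\theta^\star \leq D_{L\epsilon}^\star + L\epsilon \leq D^\star + (\norm{\lambda_{L\epsilon}^\star}_1+1)L\epsilon$; together these yield $|P^\star - D_\theta^\star| \leq (\norm{\lambda_{L\epsilon}^\star}_1+1)L\epsilon$ and the cancellation against the argument of $H^{\Phi_\theta}$ is exact. With that repair your route is equivalent to the paper's.
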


\begin{proof}
See Section \ref{sec:duality_gap_proof}
\end{proof}

The intuition behind the theorem is that given some acceptable surrogate duality gap, $\delta$, there exists a neural network architecture, $\Phi_\theta$, and a number of samples, $N$ such that the probability that the solution to \eqref{eq:empirical_dual} is within $\delta$ to the solution to \eqref{eq:functional_primal} is very high. 
The choice of neural network will influence the value of $\epsilon$ and $\lambda_{L\epsilon}^\star$. These in turn will decide the duality gap, $\delta$, as the quantity $\delta - \epsilon L(\norm{\lambda_{L\epsilon}^\star}_1 + 1)$ must be positive. 
A larger neural network will correspond to a smaller $\epsilon$ which will also has an impact on the perturbed problem \eqref{eq:perturbed_primal}. A smoother function and smaller $\epsilon$ will lead to smaller perturbations. Smaller perturbations can lead to a smaller dual variable, $\lambda_{L\epsilon}^\star$. Thus, larger neural networks and smoother dynamic systems will have smaller duality gaps. If $L\epsilon$ is large, then the perturbed problem may be infeasible. In theses cases, $\lambda_{L\epsilon}^\star$ will be infinite. This corresponds to problems where the function approximation simply can not satisfy the original constraint functions. For example, using constant functions to approximate a complicated system may violate the constraint functions $g_k$ for all possible constant functions. Thus, no $\delta$ exists to bound the solution as the parameterized empirical problem \eqref{eq:empirical_primal} has no feasible solution.  This theorem suggests that with a good enough function approximation and large enough $N$, solving \eqref{eq:empirical_dual} is a good approximation to solving \eqref{eq:functional_primal} with large probability.

There are some details to point out in Theorem \ref{thm:empirical_duality_gap}. First, the function $H^{\Phi_\theta}$ a complicated function that will usually scale with the size of the neural network. A larger neural network will lead to a smaller $\epsilon$, but may require a larger number of samples $N$ to adequately converge to an acceptable solution. The assumption on the limiting behavior of $H^{\Phi_\theta}$ is fufilled by some neural network architectures \cite{bartlett2017spectrally}, but the general behavior of this function for all neural network architectures is still a topic of research. Additionally, we assume the space of Lagrange multipliers is a compact set. This will imply, along with compact state-action space, that $\mathcal{L}$ is bounded. A finite Lagrange multiplier is a reasonable assumption as the problem \eqref{eq:functional_primal} is required to be feasible \cite{nedic2009subgradient}.

The bound established in Theorem \ref{thm:empirical_duality_gap} depends on quantities that are in general difficult to estimate, These include $H^{\Phi_\theta}$, $\norm{\lambda_{L\epsilon}^\star}_1$, $L$, $\epsilon$. Thus, while this theorem provides some insights on how these quantities influence the gap between solutions, it is mainly a statement of the existence of such values that can provide a desired result. In practice, this result can be achieved by choosing increasing the sizes of neural networks as well as data samples until the desired performance is reached. Note that the theorem follows our intuition that larger neural networks and more data will give us more accurate result. However, this theorem formalizes not only that it is more accurate, but that the error will tend to 0 as number of samples and number of parameters increase. 

\section{Proof of Theorem \ref{thm:empirical_duality_gap}}\label{sec:duality_gap_proof}
To begin, we define an intermediate problem
\begin{equation}
\label{eq:parameterized_primal}
\begin{aligned}
	P_\theta^\star = &\min_{\theta \in \Theta} \mathbb{E}_{s \sim \mathcal{S}_\mathcal{D}} [\ell_0(s, \phi_\theta)] \\
	&\textrm{s.t.  } \mathbb{E}_{s \sim \mathcal{S}_\mathcal{D}}[\boldsymbol{g}(s, \phi_\theta)] \leq 0.
\end{aligned}
\end{equation}
Note that this is the unperturbed version of \eqref{eq:perturbed_primal}. As a reminder, this problem uses a class of parameterized functions, but does not use data samples to approximate the expectation. Thus, it can be seen as a step in between \eqref{eq:functional_primal} and \eqref{eq:empirical_primal}. As with the dual problem to \eqref{eq:empirical_primal}, we can define the Lagrangian associated with \eqref{eq:parameterized_primal}
\begin{equation}
    \label{eq:parameterized_lagrangian}
    \mathcal{L}_\theta(\theta, \lambda) =  \mathbb{E}_{s \sim \mathcal{S}_\mathcal{D}}[\ell_0(s, \phi_\theta)] + \lambda^\top \mathbb{E}_{s \sim \mathcal{S}_\mathcal{D}}[\boldsymbol{g}(s, \phi_\theta)]
\end{equation}
and the dual problem
\begin{equation}
\label{eq:parameterized_dual}
D_\theta^\star = \max_{\lambda \geq 0} \min_{\theta \in \Theta} \mathcal{L}_\theta(\theta, \lambda).
\end{equation}

Using this intermediate problem, we can break the bound $\abs{P^\star - D_N^\star}$ into two components.
\begin{equation}
\label{eq:full_bound}
    \begin{aligned}
        \abs{P^\star - D_N^\star} &= \abs{(P^\star - D_\theta^\star) + (D_\theta^\star - D_N^\star)}\\
        &\leq \abs{P^\star - D_\theta^\star} + \abs{D_\theta^\star - D_N^\star}
    \end{aligned}
\end{equation}
As a reminder, $P^\star$ is the solution to the problem we want to solve in \eqref{eq:functional_primal}, $D_N^\star$ is the solution to the problem \eqref{eq:empirical_dual} we can feasibly solve, and $D_\theta^\star$ is the solution to an intermediate problem \eqref{eq:parameterized_dual}. The first half of this bound, $\abs{P^\star - D_\theta^\star}$, is the error that arises from using a parameterized function and dual approximation. The second half of this bound, $D_\theta^\star - D_N^\star$, is the error that arises from using empirical data samples. It can be seen as a kind of generalization error. The proof will now be split into two parts that will find a bound for each of these errors.

\subsection{Function Approximation Error}
We first look at the quantity $\abs{P^\star - D_\theta^\star}$. This can be further split as follows
\begin{equation}
    \begin{aligned}
        \abs{P^\star - D_\theta^\star} &= \abs{(P^\star - D^\star) + (D^\star - D_\theta^\star)}\\
        &\leq \abs{P^\star - D^\star} + \abs{D^\star - D_\theta^\star}
    \end{aligned}
\end{equation}
where $D^\star$ is the solution to the dual problem associated with \eqref{eq:functional_primal}. This is defined with the Lagrangian 
\begin{equation}
\label{eq:functional_lagrangian}
\mathcal{L}(\phi, \lambda ) =  \mathbb{E}_{s \sim \mathcal{S}_\mathcal{D}}[\ell_0(s, \phi)] + \lambda^\top \mathbb{E}_{s \sim \mathcal{S}_\mathcal{D}}[\boldsymbol{g}(x, \phi) ]
\end{equation}
and the dual problem
\begin{equation}
\label{eq:functional_dual}
D^\star = \max_{\lambda \geq 0} \min_{\phi \in \Phi} \mathcal{L}(\phi, \lambda).
\end{equation}

We note that the quantity $\abs{P^\star - D^\star}$ is actually 0 due to a result from \cite[Theorem 1]{ribeiro2012optimal}. The theorem is reproduced here using the notation of this paper.
\begin{theorem}[\cite{ribeiro2012optimal}, Theorem 1]
\label{thm:ribeiro_zero_duality}
    There is zero duality between the primal problem \eqref{eq:functional_primal} and dual problem \eqref{eq:functional_dual}, if
    \begin{enumerate}
        \item There exists a strictly feasible solution ($\phi$, $\lambda$) to \eqref{eq:functional_dual}
        \item The distribution $S$ is nonatomic.
    \end{enumerate}
\end{theorem}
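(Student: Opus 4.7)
The plan is to reduce the zero-duality claim to the convexity of a certain attainable set in the finite-dimensional space of (objective, constraint) values, and then invoke a standard Slater-type strong duality argument. The key observation is that even though the optimization variable $\phi$ is a function and the loss/constraint integrands need not be convex in $\phi$, the expectations involved can be rewritten as integrals of pointwise quantities against the distribution over $\mathcal{S}$, so the nonatomicity hypothesis can be exploited via Lyapunov's convexity theorem for vector measures.

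First, I would introduce the reachable set
\begin{equation}
\mathcal{A} \;=\; \bigl\{(\tau_0,\boldsymbol{\tau})\in\mathbb{R}^{K+1}\;:\;\exists\phi\in\Phi,\;\mathbb{E}[\ell_0(s,\phi)]\le \tau_0,\;\mathbb{E}[\boldsymbol{g}(s,\phi)]\le\boldsymbol{\tau}\bigr\},
\end{equation}
and reformulate $P^\star$ as $\inf\{\tau_0:(\tau_0,\boldsymbol{0})\in\mathcal{A}\}$. The dual problem \eqref{eq:functional_dual} is exactly the Lagrangian dual of this finite-dimensional convex-looking program, and its value $D^\star$ coincides with the closed convex hull formulation. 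Thus, if I can show $\mathcal{A}$ is convex, then by the standard geometric separation argument for the epigraph of the perturbation function together with Slater's condition (the strict feasibility assumed in hypothesis (1)), I will obtain $P^\star=D^\star$ with no duality gap. This reduction is the conceptual backbone of the proof.

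The main technical step is establishing the convexity of $\mathcal{A}$. Here I would use a splicing argument: given $\phi_1,\phi_2\in\Phi$ realizing $(\tau_0^1,\boldsymbol{\tau}^1)$ and $(\tau_0^2,\boldsymbol{\tau}^2)$ and given $\alpha\in[0,1]$, I construct $\phi_3(s)=\phi_1(s)\mathbb{I}_A(s)+\phi_2(s)\mathbb{I}_{A^c}(s)$ for a carefully chosen measurable set $A\subset\mathcal{S}$. Because $\ell_0$ and each $g_k$ evaluate $\phi$ pointwise, the expectations of $\ell_0(s,\phi_3)$ and $\boldsymbol{g}(s,\phi_3)$ split over $A$ and $A^c$. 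I then consider the $\mathbb{R}^{2(K+1)}$-valued measure
\begin{equation}
\mu(A) \;=\; \Bigl(\textstyle\int_A \ell_0(s,\phi_1)\,d\mathcal{S}_\mathcal{D},\;\int_A \boldsymbol{g}(s,\phi_1)\,d\mathcal{S}_\mathcal{D},\;\int_A \ell_0(s,\phi_2)\,d\mathcal{S}_\mathcal{D},\;\int_A \boldsymbol{g}(s,\phi_2)\,d\mathcal{S}_\mathcal{D}\Bigr).
\end{equation}
Because $\mathcal{S}_\mathcal{D}$ is nonatomic, $\mu$ is a nonatomic vector measure, and Lyapunov's convexity theorem guarantees that its range is a convex subset of $\mathbb{R}^{2(K+1)}$ containing $\mu(\emptyset)=0$ and $\mu(\mathcal{S})$. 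Therefore there exists a measurable $A^\star$ with $\mu(A^\star)=\alpha\,\mu(\mathcal{S})$, which gives
\begin{equation}
\mathbb{E}[\ell_0(s,\phi_3)] \;=\; \alpha\,\mathbb{E}[\ell_0(s,\phi_1)]+(1-\alpha)\,\mathbb{E}[\ell_0(s,\phi_2)],\qquad \mathbb{E}[\boldsymbol{g}(s,\phi_3)] \;=\; \alpha\,\mathbb{E}[\boldsymbol{g}(s,\phi_1)]+(1-\alpha)\,\mathbb{E}[\boldsymbol{g}(s,\phi_2)],
\end{equation}
so $\alpha(\tau_0^1,\boldsymbol{\tau}^1)+(1-\alpha)(\tau_0^2,\boldsymbol{\tau}^2)\in\mathcal{A}$. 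Together with upward-closure in the $\tau_i$ coordinates this yields convexity of $\mathcal{A}$.

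With convexity of $\mathcal{A}$ in hand, zero duality follows from a standard hyperplane-separation argument. By strict feasibility there exists $\phi^\dagger$ with $\mathbb{E}[\boldsymbol{g}(s,\phi^\dagger)]<0$ componentwise, so the point $(P^\star-\eta,\boldsymbol{0})$ does not lie in $\mathcal{A}$ for any $\eta>0$ and yet $\mathcal{A}$ has nonempty interior intersecting the half-space $\{\boldsymbol{\tau}\le\boldsymbol{0}\}$; separating this point from $\mathcal{A}$ produces a nonnegative multiplier $\lambda\ge 0$ such that the minimum of the Lagrangian equals $P^\star-\eta$, and letting $\eta\downarrow 0$ delivers $D^\star\ge P^\star$. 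The reverse inequality $D^\star\le P^\star$ is the textbook weak-duality bound, completing the proof. The main obstacle is the Lyapunov step: one must verify that $\Phi$ is closed under the measurable splicing $\phi_1\mathbb{I}_A+\phi_2\mathbb{I}_{A^c}$ (this is automatic if $\Phi$ is a space of measurable functions but would require extra care under regularity restrictions), and that the range of $\mu$ contains the required convex combination of its endpoints, both of which rely squarely on the nonatomicity hypothesis (2).
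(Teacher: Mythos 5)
The paper does not actually prove this theorem: it is imported verbatim from the cited reference, and the only original content the paper supplies is the reduction in Appendix~\ref{appendix:equivalent_formulation} showing that the sufficiently accurate problem \eqref{eq:functional_primal} can be written as an instance of the optimization problem treated in \cite{ribeiro2012optimal} (which in particular requires $l$ to take values in a compact set and $\Phi$ to be compact). Your proposal instead reconstructs the underlying proof of the cited result, and your outline is essentially the argument used there: nonatomicity of $\mathcal{S}_\mathcal{D}$ plus Lyapunov's convexity theorem for the finite-dimensional vector measure built from the objective and constraint integrands yields convexity of the attainable set $\mathcal{A}$, after which strict feasibility and a separating-hyperplane argument give $P^\star = D^\star$. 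So the two routes differ in kind rather than in substance: the paper's route buys brevity and offloads the measure-theoretic work to the reference, while yours makes the mechanism explicit and shows exactly where nonatomicity enters. The one point you should not gloss over is the closure-under-splicing issue you flag at the end: the paper suggests $\Phi$ could be the space of all continuous functions, in which case $\phi_1\mathbb{I}_A + \phi_2\mathbb{I}_{A^c}$ generally leaves $\Phi$, and the Lyapunov step fails as stated; the cited result is formulated for function classes defined by pointwise (measurable-selection) constraints, and any self-contained proof must either restrict $\Phi$ accordingly or pass to a relaxation whose value can be shown to coincide. With that hypothesis made explicit, your sketch is correct.
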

While the problem defined in \cite{ribeiro2012optimal} is different from the sufficiently accurate problem defined in \ref{eq:functional_primal}, there is an equivalent problem formulation (see Appendix \ref{appendix:equivalent_formulation}). Since Theorem \ref{thm:empirical_duality_gap} fulfills the assumptions of Theorem \ref{thm:ribeiro_zero_duality}, we get $\abs{P^\star - D^\star} = 0$.

For the second half of this approximation error, $\abs{D^\star - D_\theta^\star}$, has also been previously studied in \cite[Theorem 1]{eisen2018learning} in the context of a slightly different problem formulation. The following theorem adapts \cite[Theorem 1]{eisen2018learning} to the \textit{Sufficiently Accurate} problem formulation \eqref{eq:functional_primal}.
\begin{theorem}
\label{thm:eisen_parameterization}
Given the primal problem \eqref{eq:functional_primal} and the dual problem \eqref{eq:parameterized_dual}, along with the following assumptions
\begin{enumerate}
    \item $\Phi_\theta$ is an $\epsilon$-universal function approximator for $\Phi$, and there exists a strictly feasible solution $\phi_\theta$ for \eqref{eq:parameterized_primal}.
    \item The loss and constraint functions are expectation-wise Lipschitz-continuous with constant $L$.
    \item All assumptions of Theorem, \ref{thm:ribeiro_zero_duality}
\end{enumerate}
The dual value, $D_\theta^\star$ is bounded by
\begin{equation}
    D^\star \leq D_\theta^\star \leq D^\star + (\norm{\lambda_{L\epsilon}^\star}_1 + 1) L \epsilon,
\end{equation}
where $\lambda_{L\epsilon}^\star$ is the dual variable that achieves the optimal solution to \eqref{eq:perturbed_dual}.
\end{theorem}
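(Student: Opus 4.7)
The plan is to prove the two inequalities separately. The lower bound $D^\star \le D_\theta^\star$ is immediate: since $\Phi_\theta\subset\Phi$, for every $\lambda\ge 0$ the inner minimization satisfies $\min_{\phi\in\Phi}\mathcal{L}(\phi,\lambda)\le\min_{\theta\in\Theta}\mathcal{L}_\theta(\theta,\lambda)$, and taking the maximum over $\lambda$ preserves the inequality. So the real work is in the upper bound.

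For the upper bound I would chain three inequalities: $D_\theta^\star \le P_\theta^\star \le P_{L\epsilon}^\star + L\epsilon \le D^\star + \|\lambda_{L\epsilon}^\star\|_1 L\epsilon + L\epsilon$. The first step is weak duality applied to the parameterized problem \eqref{eq:parameterized_primal}. The third step uses zero duality gap on the perturbed problem (which holds by Theorem \ref{thm:ribeiro_zero_duality}, since strict feasibility of the unperturbed problem plus the $\epsilon$-universal approximator assumption in the statement imply strict feasibility of the perturbed problem, and the distribution is nonatomic), giving $P_{L\epsilon}^\star = D_{L\epsilon}^\star$. Then a standard sensitivity argument on the perturbed dual yields
\begin{equation}
D_{L\epsilon}^\star = \min_{\phi\in\Phi}\mathcal{L}(\phi,\lambda_{L\epsilon}^\star) + (\lambda_{L\epsilon}^\star)^\top \boldsymbol{1}L\epsilon \le D^\star + \|\lambda_{L\epsilon}^\star\|_1 L\epsilon,
\end{equation}
because $\min_\phi \mathcal{L}(\phi,\lambda_{L\epsilon}^\star) \le \max_\lambda \min_\phi \mathcal{L}(\phi,\lambda) = D^\star$.

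The middle step $P_\theta^\star \le P_{L\epsilon}^\star + L\epsilon$ is the heart of the argument and the one I expect to require the most care. I would proceed as follows: take any $\phi$ feasible for the perturbed problem \eqref{eq:perturbed_primal}, i.e.\ with $\mathbb{E}[\boldsymbol{g}(s,\phi)] + \boldsymbol{1}L\epsilon \le 0$. By Definition \ref{def:epsilon_unviersal}, there exists $\phi_\theta\in\Phi_\theta$ with $\mathbb{E}\|\phi-\phi_\theta\|_\infty \le \epsilon$. Applying the expectation-wise Lipschitz property from Assumption \ref{assumption_lipschitz} componentwise gives
\begin{equation}
\mathbb{E}[\boldsymbol{g}(s,\phi_\theta)] \le \mathbb{E}[\boldsymbol{g}(s,\phi)] + L\epsilon\,\boldsymbol{1} \le 0,
\end{equation}
so $\phi_\theta$ is feasible for \eqref{eq:parameterized_primal}; an analogous Lipschitz bound on $\ell_0$ gives $\mathbb{E}[\ell_0(s,\phi_\theta)] \le \mathbb{E}[\ell_0(s,\phi)] + L\epsilon$. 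Minimizing the right-hand side over feasible $\phi$ of the perturbed problem yields $P_\theta^\star \le P_{L\epsilon}^\star + L\epsilon$.

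Combining the three steps gives $D_\theta^\star \le D^\star + (\|\lambda_{L\epsilon}^\star\|_1 + 1)L\epsilon$, which together with the lower bound completes the proof. The main obstacle is the middle step, specifically ensuring that the Lipschitz property of $\ell_0$ and each $g_k$ (which involve indicator functions and so may be discontinuous) combines correctly with the infinity-norm approximation to produce a uniform bound on the expected loss and each constraint; this is exactly where Assumption \ref{assumption_lipschitz} is used rather than ordinary Lipschitz continuity. A secondary subtlety is verifying that strict feasibility is preserved for the perturbed problem so that Theorem \ref{thm:ribeiro_zero_duality} applies and $\lambda_{L\epsilon}^\star$ is finite, which is guaranteed by the assumption of a strictly feasible $\phi_\theta$ together with the $\epsilon$-universal approximation property.
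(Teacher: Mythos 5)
Your proof is correct, and the upper bound follows a genuinely different route from the paper's. The paper works entirely at the level of Lagrangians: it adds and subtracts $\mathcal{L}(\phi,\lambda)$ inside the saddle point, bounds $\abs{\mathcal{L}(\phi_\theta,\lambda)-\mathcal{L}(\phi,\lambda)}\le(\norm{\lambda}_1+1)L\epsilon$ via the triangle inequality, H\"older, Jensen, and Assumption \ref{assumption_lipschitz}, and thereby shows $D_\theta^\star\le L\epsilon+D_{L\epsilon}^\star$ directly, before invoking the perturbation inequality $D^\star\ge D_{L\epsilon}^\star-\norm{\lambda_{L\epsilon}^\star}_1L\epsilon$. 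You instead argue at the primal level: weak duality gives $D_\theta^\star\le P_\theta^\star$, a feasibility-transfer argument (any $\phi$ feasible for the perturbed problem has an $\epsilon$-close $\phi_\theta$ that is feasible for the parameterized problem with objective inflated by at most $L\epsilon$) gives $P_\theta^\star\le P_{L\epsilon}^\star+L\epsilon$, and zero duality gap of the perturbed problem plus the observation $\min_\phi\mathcal{L}(\phi,\lambda_{L\epsilon}^\star)\le D^\star$ closes the chain. Your route is more transparent about \emph{why} the perturbed problem appears (it is exactly the feasibility margin needed to absorb the approximation error in the constraints), and your derivation of $D_{L\epsilon}^\star\le D^\star+\norm{\lambda_{L\epsilon}^\star}_1L\epsilon$ from the dual function itself is more self-contained than the paper's citation of the sensitivity inequality from Boyd. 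The paper's route buys something in exchange: it never needs $P_\theta^\star$ or weak duality for the parameterized problem, staying purely in dual quantities throughout. Both routes share the same two load-bearing ingredients (the expectation-wise Lipschitz property combined with the $\epsilon$-universal approximation, and zero duality gap for the perturbed functional problem), and both share the same unaddressed caveat, namely that strict feasibility of the \emph{perturbed} problem is needed for $\lambda_{L\epsilon}^\star$ to be finite, which neither you nor the paper derives from the stated hypotheses but which the surrounding discussion acknowledges as the regime in which the bound is meaningful. Your lower-bound argument via $\Phi_\theta\subset\Phi$ is also correct and is a cleaner packaging of the paper's chain of inequalities through the saddle points.
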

\begin{proof}
See Appendix \ref{appendix:eisen_proof}
\end{proof}

Again, the assumptions of Theorem \ref{thm:empirical_duality_gap} fulfill the assumptions for Theorem \ref{thm:eisen_parameterization}.
Due to notational differences, as well as a different way of framing the optimization problem, the proof has been adapted from \cite{eisen2018learning} and is given in Appendix \ref{appendix:eisen_proof}.
With Theorem \ref{thm:ribeiro_zero_duality} and \ref{thm:eisen_parameterization}, the following can be stated
\begin{equation}
\label{eq:parameterized_bound}
    \abs{P^\star - D_\theta^\star} \leq (\norm{\lambda_{L\epsilon}^\star}_1 + 1) L \epsilon
\end{equation}

\subsection{Empirical Error}
We now look at the empirical error, $\abs{D_\theta^\star - D_N^\star}$. We first observe the following Lemma.
\begin{lemma}
\label{lemma:empirical_error}
Let $\Delta \mathcal{L} (\theta,\lambda) = |\mathcal{L}_\theta(\theta, \lambda) - \mathcal{L}_N(\theta, \lambda) |$. Then under the assumption of Theorem \ref{thm:empirical_duality_gap} it follows that 
\begin{equation}
\label{eq:empirical_bound}
|D_\theta^\star - D_N^{\star} | \leq \sup_{\theta,\lambda}\Delta \mathcal{L}(\theta, \lambda).
\end{equation} 
\end{lemma}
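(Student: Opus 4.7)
The plan is to establish the bound by handling the outer max and inner min separately with the elementary fact that absolute differences of suprema (or infima) of two functions are dominated by the uniform norm of their difference. Concretely, I would set $f_\theta(\lambda) := \min_{\theta \in \Theta} \mathcal{L}_\theta(\theta,\lambda)$ and $f_N(\lambda) := \min_{\theta \in \Theta} \mathcal{L}_N(\theta,\lambda)$, so that by definition $D_\theta^\star = \max_{\lambda \geq 0} f_\theta(\lambda)$ and $D_N^\star = \max_{\lambda \geq 0} f_N(\lambda)$. The strategy is then a two-step reduction: first bound $|f_\theta(\lambda) - f_N(\lambda)|$ uniformly in $\lambda$ by $\sup_{\theta,\lambda}\Delta\mathcal{L}(\theta,\lambda)$, and then take the maximum over $\lambda$.

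For the first step, I would invoke the well-known inequality that for any two real-valued functions $h_1,h_2$ on a common domain,
\begin{equation}
\bigl|\inf_{\theta} h_1(\theta) - \inf_{\theta} h_2(\theta)\bigr| \;\leq\; \sup_{\theta} |h_1(\theta) - h_2(\theta)|,
\end{equation}
which follows from writing $h_1(\theta) \leq h_2(\theta) + |h_1(\theta)-h_2(\theta)|$, taking infima, and symmetrizing. Applied with $h_1(\theta) = \mathcal{L}_\theta(\theta,\lambda)$ and $h_2(\theta) = \mathcal{L}_N(\theta,\lambda)$ at each fixed $\lambda$, this yields
\begin{equation}
|f_\theta(\lambda) - f_N(\lambda)| \;\leq\; \sup_{\theta \in \Theta} \Delta\mathcal{L}(\theta,\lambda) \;\leq\; \sup_{\theta,\lambda}\Delta\mathcal{L}(\theta,\lambda).
\end{equation}

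For the second step, I would apply the analogous inequality for suprema, namely $|\sup_\lambda f_\theta(\lambda) - \sup_\lambda f_N(\lambda)| \leq \sup_\lambda |f_\theta(\lambda) - f_N(\lambda)|$, which again comes from the basic two-sided bound. Combining with the previous display gives $|D_\theta^\star - D_N^\star| \leq \sup_{\theta,\lambda}\Delta\mathcal{L}(\theta,\lambda)$ as required. The only mild subtlety is ensuring these suprema and infima are attained or at least finite; this is handled by the standing compactness assumptions of Theorem~\ref{thm:empirical_duality_gap}, which restrict $\theta$ to a compact set and the Lagrange multipliers $\lambda$ to a compact set so that $\mathcal{L}_\theta$ and $\mathcal{L}_N$ remain bounded. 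I do not expect any genuine obstacle here; the lemma is essentially a packaging of the elementary min-max stability estimate, and the real analytical content is postponed to bounding $\sup_{\theta,\lambda}\Delta\mathcal{L}(\theta,\lambda)$ via VC-entropy-type concentration in the subsequent step of the proof of Theorem~\ref{thm:empirical_duality_gap}.
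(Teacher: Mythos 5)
Your proposal is correct and takes essentially the same route as the paper: the paper's proof is just your two elementary stability inequalities ($|\inf h_1-\inf h_2|\leq\sup|h_1-h_2|$ and its analogue for suprema) unrolled, deriving one-sided bounds by evaluating at the optimal multipliers $\lambda_\theta^\star$, $\lambda_N^\star$ and the corresponding minimizers before passing to the conservative supremum over $(\theta,\lambda)$. There is no gap; the compactness/boundedness caveat you mention is handled exactly as you anticipate.
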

\begin{proof}
See Appendix \ref{appendix:lemma_proof}
\end{proof}

\subsection{Probabilistic Bound}
Substituting the parameterized bound \eqref{eq:parameterized_bound} and the empirical bound \eqref{eq:empirical_bound} in \eqref{eq:full_bound} yields the following implication
\begin{equation}\label{eq:implication}
     \sup_{\theta,\lambda}\Delta \mathcal{L}(\theta, \lambda) \leq \delta -\epsilon L(\norm{\lambda_{L\epsilon}^\star}_1 + 1) \Rightarrow \norm{P^\star-D_N^\star} \leq \delta.
\end{equation}
Let $\mathbb{P}\left(|P^\star - D_N^\star| \leq {\delta}\right)$ be a probability over samples $\{s_1, s_2, \hdots, s_N\}$ that are drawn to estimate the expectation in the primal problem  \eqref{eq:empirical_primal}. Using the implication \eqref{eq:implication} it follows that 

\begin{equation}
\begin{aligned}
    &\mathbb{P}\left(|P^\star - D_N^\star| \leq {\delta}\right) \\
    &\geq \mathbb{P}\left(\sup_{\theta,\lambda}\Delta \mathcal{L}(\theta, \lambda)  \leq {\delta} - \epsilon L (\norm{\lambda_{L\epsilon}^\star}_1 + 1)\right) \\
    &= 1 - \mathbb{P}\left( \sup_{\theta,\lambda}\Delta \mathcal{L}(\theta, \lambda)  > {\delta} - \epsilon L (\norm{\lambda_{L\epsilon}^\star}_1 + 1)\right),
\end{aligned}    
\end{equation}

where the equality follows directly from the fact that for any event $A$, $\mathbb{P}(A) = 1 - \mathbb{P}(A^c)$. The assumptions of Theorem \ref{thm:empirical_duality_gap} allows us to use the following result from Statistical Learning Theory \cite[(Section II.B)]{vapnik1999overview}, 
\begin{equation}
    \lim_{N\rightarrow \infty} \mathbb{P}\left(\sup_{\theta,\lambda}\Delta \mathcal{L}(\theta, \lambda)  >{\delta} - \epsilon L (\norm{\lambda_{L\epsilon}^\star}_1 + 1) \right) = 0.
\end{equation}
Note that this theorem requires bounded loss functions. The assumptions for a bounded dual variable, and compact state-action space in Theorem \ref{thm:empirical_duality_gap} satisfies this constraint.

Thus, this establishes that for any $\delta>0$, we have 
 $   \lim_{N\rightarrow \infty} \mathbb{P}\left(|P^\star - D_N^\star| \leq {\delta}\right) = 1.$ 
This concludes the proof of the theorem. 

\section{Constrained Solution Via Primal-Dual Method} \label{sec:primal_dual}

\begin{figure}[h]
	\centering
	\includegraphics[width=0.4\textwidth]{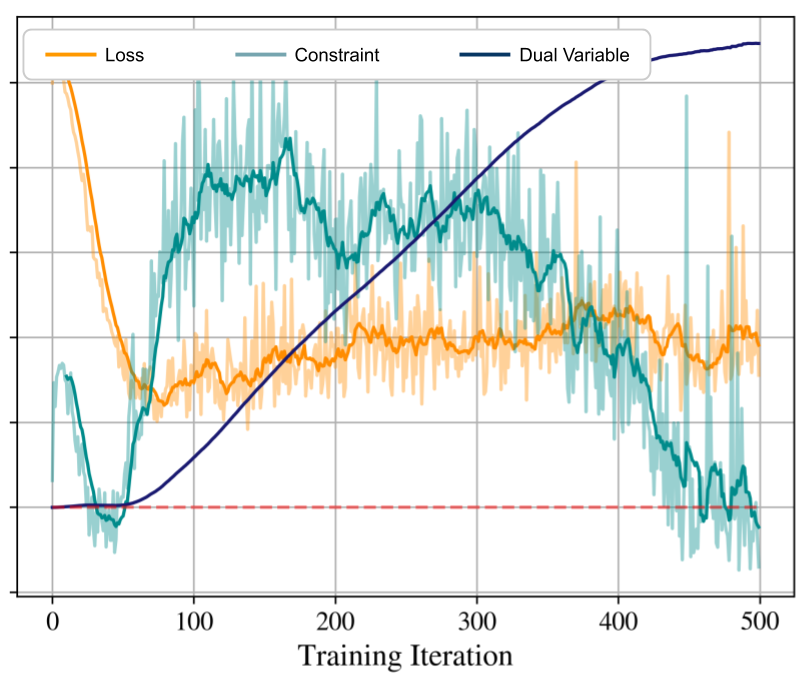}
	\caption{\textbf{Primal Dual Training Curve: } Example of the evolution of the constraint, loss, and dual variables during training. The red dotted line shows 0. The curves have been scaled so that they fit on the same y-axis and they are of different magnitudes. The constraint function is unfeasible, which leads to a growing dual variable. This can cause the loss function to increase until the constraint is feasible again.
	}
    \label{fig:training_curve}
\end{figure}

\begin{algorithm2e}
	\SetAlgoLined
	\KwIn{Data Samples, $S = \{ s_1, s_2, \hdots, s_N \}$\\
		\quad Neural Network with parameter space $\Theta$\\
		\quad Batch Size, $M$\\
		\quad Primal Learning rate, $\alpha_\theta$\\
		\quad Dual Learning rate, $\alpha_\lambda$ }
	Initialize $\theta \in \Theta$\;
	Initialize $\lambda = \boldsymbol{0}$\;
	\While{Not Converged}{
		Sample batch of data $\hat{S} = \{ s_{i_1}, \hdots, s_{i_M} \}$ from $S$\;
		Use $\hat{S}$ to compute estimates of $\nabla_\theta \mathcal{L}_N(\theta, \lambda)$ and $\nabla_\lambda \mathcal{L}_N(\theta, \lambda)$ (See \eqref{eq:primal_gradient} and \eqref{eq:dual_gradient})\;
		$\theta \leftarrow \theta - \alpha_\theta \nabla_\theta \mathcal{L}_N(\theta, \lambda)$ \;
		$\lambda \leftarrow \lambda + \alpha_\lambda \nabla_\lambda \mathcal{L}_N(\theta, \lambda)$ \;
		$\lambda \leftarrow \lbrack \lambda \rbrack_+$
	}
	\caption{Primal-Dual}
	\label{algo:primal_dual}
\end{algorithm2e}

Section \ref{sec:duality_gap} has shown that problem \eqref{eq:empirical_dual} can approximate \eqref{eq:functional_primal} given a large enough neural network and enough samples. This section will discuss how to compute a solution \eqref{eq:empirical_dual}. There are many primal-dual methods \cite{goldstein2014fast, gay1998primal, gill2012primal} in the literature to solve this exact problem, and Algorithm \ref{algo:primal_dual} is an example of a simple primal-dual algorithm. One way to approach this problem is to consider the optimal dual variable, $\lambda_N^\star$. Given knowledge of $\lambda_N^\star$, the problem reduces to the following \textit{unconstrained} minimization
\begin{equation}
    D_N^* = \min_{\theta \in \Theta} \mathcal{L}_N(\phi, \lambda_N^\star)
\end{equation}
A possible solution method is to start with an estimate of $\lambda_N^\star$, and solve the minimization problem. Then holding the primal variables fixed, update the dual variables by solving the outer maximization. This method can be seen as solving a sequence of unconstrained minimization problems. This method can be further approximated; instead of fully minimizing with respect to the primal variables, a gradient descent step can be taken. And instead of fully maximizing with respect to the dual variables, a gradient ascent step can be taken. This leads to Algorithm \ref{algo:primal_dual} where we iterate between the inner minimization step and the outer maximization step. At each iteration, dual variables are projected onto the positive orthant of $\mathbb{R}^K$, denoted by the projection operator, $\lbrack \lambda \rbrack_+$. This is to ensure non-negativity of the dual variables.

\begin{figure*}[ht]
    \centering
    \includegraphics[width=0.9\textwidth]{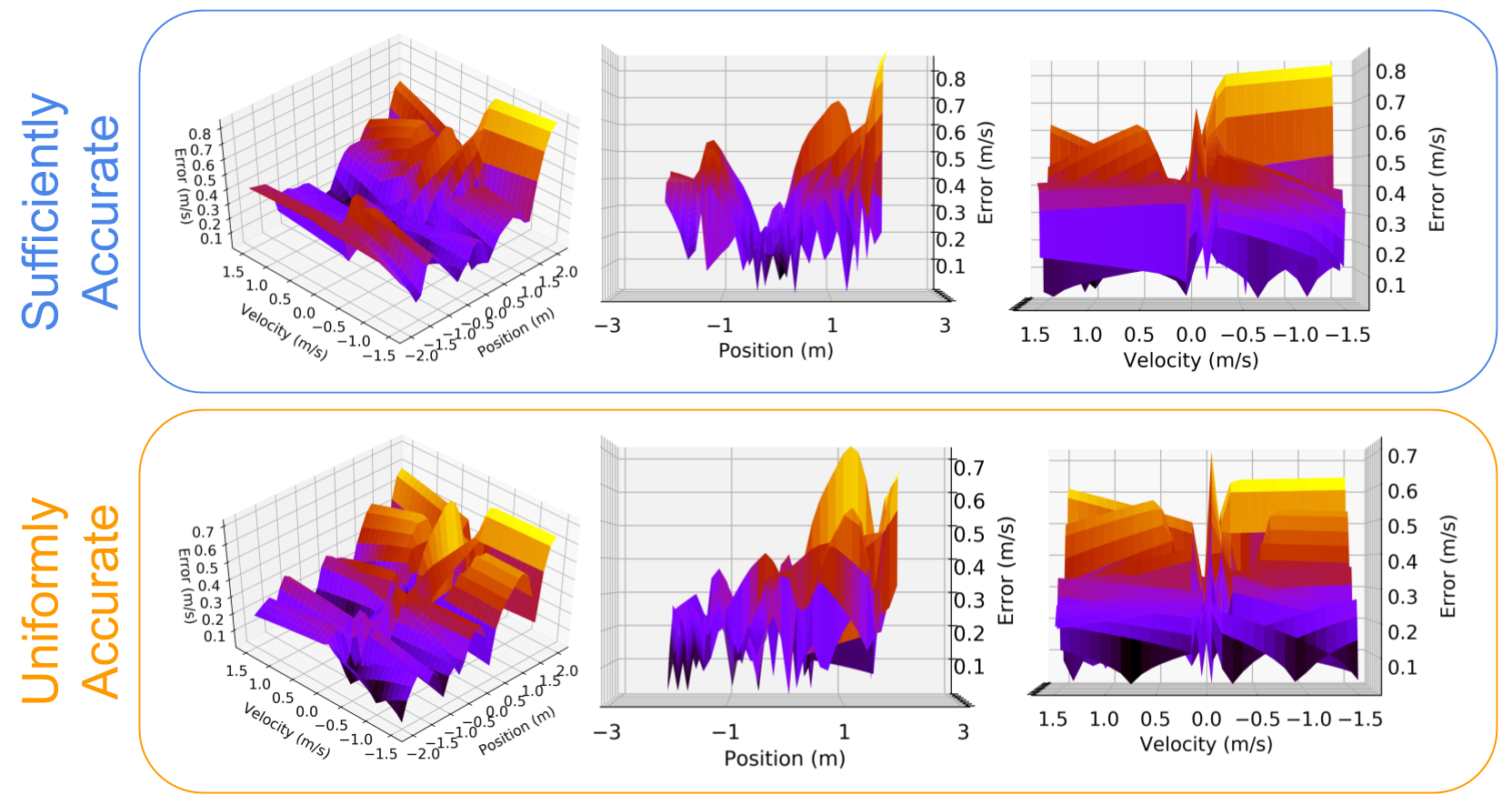}
    \caption{\textbf{Double Integrator model errors: } The error in predicted velocity of the sufficiently accurate and the uniformly accurate models. There is a constant control input of $u=1$ used to generate these plots. The top row contains three different views of the error for the sufficiently accurate model, while the bottom row contains the same three views for the uniformly accurate model.
    The z-axis on each plot is the error in the velocity for the difference $|f(x_t, u_t) - (\hat{f}(x_t, u_t) + \phi_\theta(x_t, u_t))|$. Best viewed in color.}
    \label{fig:double_integrator_model_errors}
\end{figure*}

In many cases, the full gradient of $\nabla_\theta \mathcal{L}_N(\theta, \lambda)$ and $\nabla_\lambda \mathcal{L}_N(\theta, \lambda)$ can be too expensive to compute. This is due to the possibly large number of samples $N$. An alternative is to take stochastic gradient descent and ascent steps. The gradients can be approximated by taking $M$ random samples of the whole dataset $S = {s_1, \hdots, s_N}$. The samples will be denoted as $\hat{S} = {s_{i_1}, \hdots, s_{i_M}}$ where $i_1$ is an integer index into whole dataset $S$. Using $\hat{S}$, we obtain
\begin{equation}
\label{eq:primal_gradient}
\begin{aligned}
    \nabla_\theta \mathcal{L}_N(\theta, \lambda) &= \nabla_\theta \lbrack \frac{1}{N} \sum_{i=1}^N \ell_0(s_i, \phi_\theta) + \lambda^\top \frac{1}{N}\sum_{i=1}^N \boldsymbol{g}(s_i, \phi_\theta) \rbrack \\
    &\approx \nabla_\theta \lbrack \frac{1}{M} \sum_{j=1}^M \ell_0(s_{i_j}, \phi_\theta) + \lambda^\top \frac{1}{M}\sum_{j=1}^M \boldsymbol{g}(s_{i_j}, \phi_\theta) \rbrack \\
    &= \frac{1}{M} \sum_{j=1}^M \nabla_\theta \ell_0(s_{i_j}, \phi_\theta) + \lambda^\top \frac{1}{M}\sum_{j=1}^M \nabla_\theta \boldsymbol{g}(s_{i_j}, \phi_\theta).
\end{aligned}
\end{equation}
The gradients $\nabla_\theta \ell_0(s_{i_j}, \phi_\theta)$ and $\nabla_\theta \boldsymbol{g}(s_{i_j}, \phi_\theta)$ can be computed easily using backpropogation. Similarly, for $\nabla_\lambda \mathcal{L}_N(\theta, \lambda)$, 
\begin{equation}
\label{eq:dual_gradient}
\begin{aligned}
    \nabla_\lambda \mathcal{L}_N(\theta, \lambda) 
    &\approx \nabla_\lambda \lbrack \frac{1}{M} \sum_{j=1}^M \ell_0(s_{i_j}, \phi_\theta) + \lambda^\top \frac{1}{M}\sum_{j=1}^M \boldsymbol{g}(s_{i_j}, \phi_\theta) \rbrack\\
    &= \frac{1}{M}\sum_{j=1}^M \boldsymbol{g}(s_{i_j}, \phi_\theta).
\end{aligned}
\end{equation}
The dual gradient can be estimated as simply the average of the constraint functions over the sampled dataset.

In the simplest form of the primal-dual algorithm, the variables are updated with simple gradient ascent/descent steps. These updates can be replaced with more complicated update schemes, such as using momentum \cite{sutskever2013importance} or adaptive learning rates \cite{kingma2014adam}. Higher order optimization methods such as Newton's method can be used to replace the gradient ascent and descent steps. For large neural networks, this can be unfeasible as it requires the computation of Hessians with respect to neural network weights. The memory complexity for the Hessian is quadratic with the number of neural network weights.

The primal-dual algorithm presented here is not guaranteed to converge to the global optimum. With proper choice of learning rate, it can converge to a local optimum or saddle point. This issue is present in unconstrained formulations like \eqref{eq:unconstrained_problem} as well. An example of the evolution of the loss and constraint functions is shown in Figure \ref{fig:training_curve}.

\section{Experiments}\label{sec:experiments}
This section shows examples of the \textit{Sufficiently Accurate} model learning problem. First, experiments are performed using a simple double integrator experiencing unknown dynamic friction. The simplicity of this problem along with the small state space allows us to explore and visualize some of the properties of the approximated solution. Next, two more interesting examples are shown. One example learns how a ball bounces on a paddle with unknown paddle orientation and coefficient of restitution. The other example mitigates ground effects which can disturb the landing sequence of a quadrotor. The experiments will compare the sufficiently accurate problem \eqref{eq:functional_primal} with the unconstrained problem \eqref{eq:unconstrained_problem} which will be denoted as the \textit{Uniformly Accurate} problem. Each experimental subsection will be broken down into three parts, 1) System and Task introduction, 2) Experimental details, and 3) Results.

\subsection{Double Integrator with Friction}

\begin{figure}[h]
	\centering
	\includegraphics[width=0.4\textwidth]{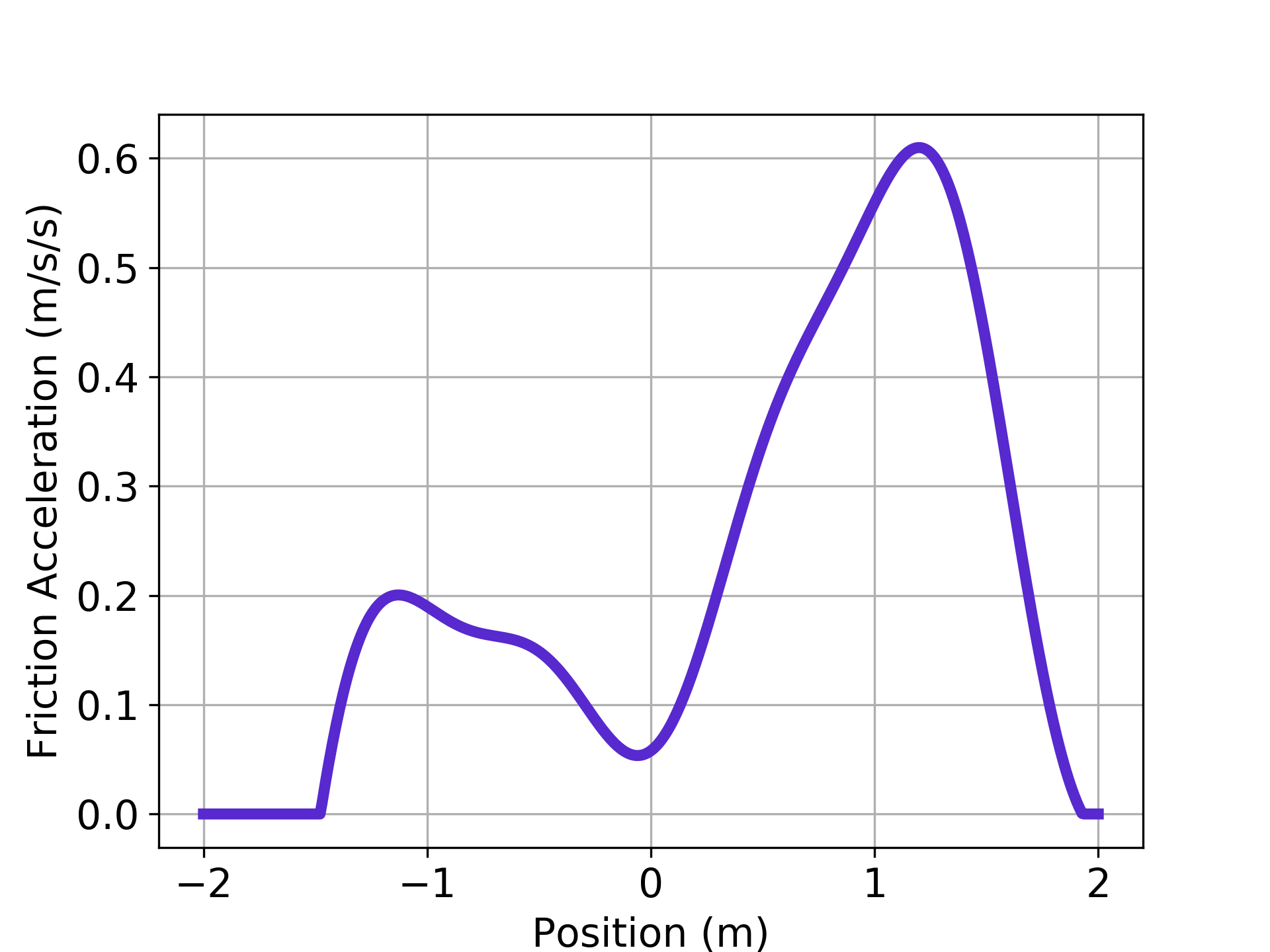}
	\caption{\textbf{Double integrator friction force: } The magnitude of the acceleration due to the position varying kinetic friction force.
	}
    \label{fig:friction}
\end{figure}

\begin{figure}[h]
	\centering
	\includegraphics[width=0.5\textwidth]{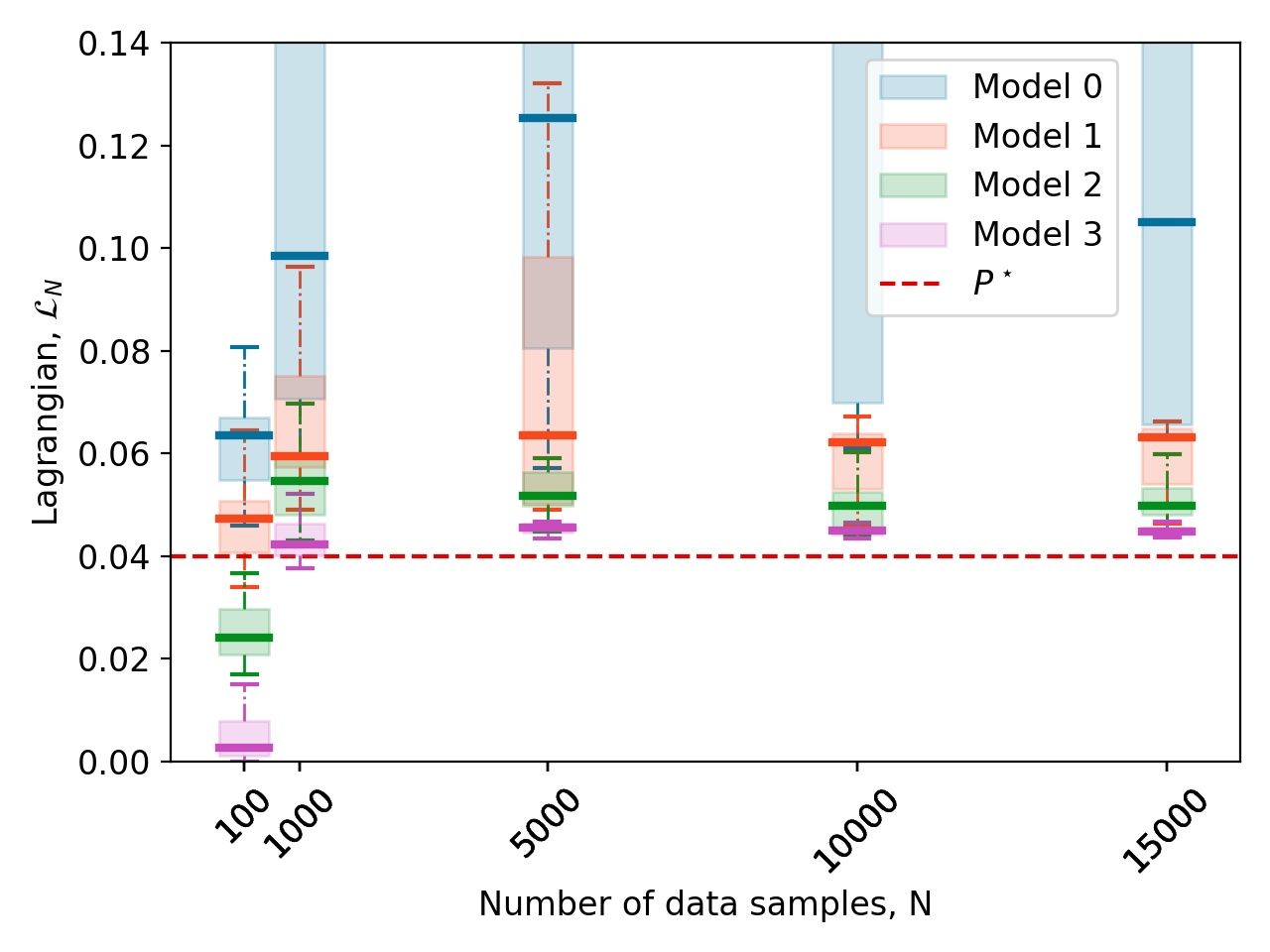}
	\caption{\textbf{Experimental Surrogate Duality Gaps:} Duality gaps of learning the double integrator problem with different neural networks and sample sizes (best viewed in color). The y-axis shows the Lagrangian value at the end of training which approximates $D_N^\star$. The models numbers indicate how large the neural network is with Model 0 being the smallest network. For each $N$, 15 tests with each model were run, and a box plot is shown that indicates the median as a solid bolded line. The ends of each box are the 1st and 3rd quartile, while the whiskers on the plot are the minimum and maximum values. The red line is the optimal value to the original problem \eqref{eq:functional_primal}. Note that for $N=10000$, the median is not shown for Model 0 as it is very large (0.21).
	}
    \label{fig:double_integrator_converging_means}
\end{figure}

\begin{figure}[h]
	\centering
	\includegraphics[width=0.4\textwidth]{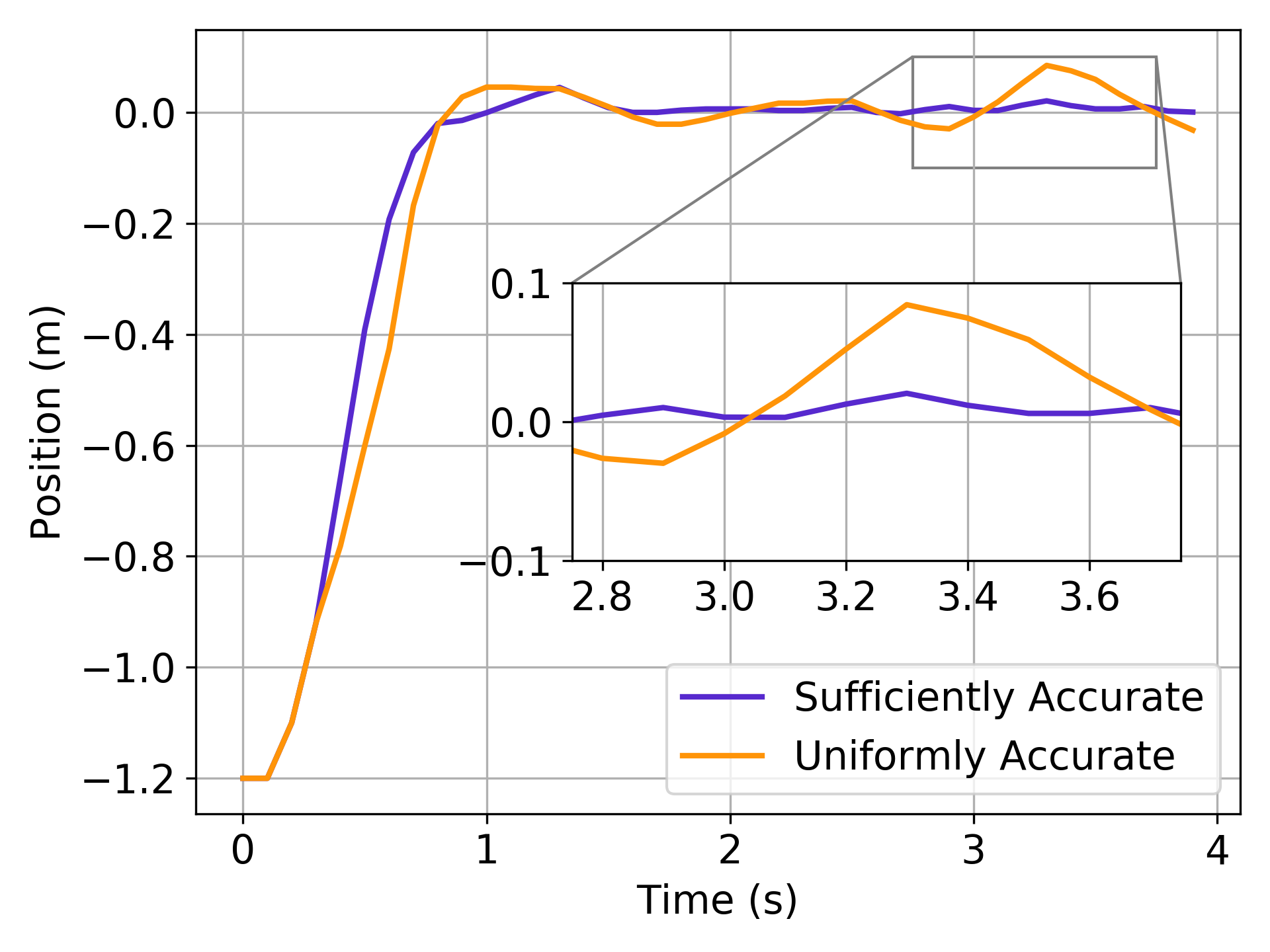}
	\caption{\textbf{Trajectory of the double integrator position:} This shows the evolution of the trajectory of the double integrator when controlled using MPC with both a sufficiently accurate and uniformly accurate model.
	}
	\label{fig:double_integrator_trajectory}
\end{figure}

\subsubsection{Introduction}
To analyze aspects of the \textit{Sufficiently Accurate} model learning formulation, simple experiments are performed on a simple double integrator system with dynamic friction. When trying to control the system, a simple base model to use is that of a double integrator without any friction
\begin{equation}
    \begin{bmatrix} p_{t+1} \\ v_{t+1} \end{bmatrix} = 
    \begin{bmatrix} 1 & \Delta t \\ 0 & 1  \end{bmatrix} \begin{bmatrix} p_{t} \\ v_{t} \end{bmatrix} + \begin{bmatrix} 0 \\  \Delta t \end{bmatrix} u_t
    \label{eq:double_integrator_base}
\end{equation}
where $p$ is the position of the system, $v$ is the velocity, $u$ is the control input, and $\Delta t$ is the sampling time. The state of the system is $x = \lbrack p, v \rbrack$.The true model of the system that is unknown to the controller is
\begin{align}
    \begin{bmatrix} p_{t+1} \\ v_{t+1} \end{bmatrix} =
    \begin{bmatrix} 1 & \Delta t \\ 0 & 1  \end{bmatrix} \begin{bmatrix} p_{t} \\ v_{t} \end{bmatrix} + \begin{bmatrix} 0 \\ (u_t \Delta t) - c(v_t, u_t, b(p_t)) \end{bmatrix}
    \label{eq:double_integrator_with_friction}
\end{align}
where $b(p_t)$ a position varying kinetic friction. $c$ is a function that ensures that the friction cannot reverse the the direction of the speed (it is an artifact of the discrete time formulation)
\begin{equation}
    c(v_t, u_t, b(p_t)) = 
    \begin{cases}
    v_t + u_t \Delta t, \text{ if }\\
        \quad sign(v_t + u_t \Delta t) \neq \\
        \quad sign(v_t + u_t \Delta t + b(p_t))\\
    b(p_t), \text{ otherwise}.
    \end{cases}
\end{equation}
If within a single time step, the friction force will change the sign of the velocity, $c$ will set $v_{t+1}$ to be 0. Otherwise, $c$ will not modify the friction force in any way.
The specific $b(p)$ used is shown in Figure \ref{fig:friction} and the sampling time is set to $\Delta t = 0.1$. The task is to drive the system to the origin $\begin{bmatrix} p & v \end{bmatrix} = \begin{bmatrix} 0 & 0 \end{bmatrix}$. 

\subsubsection{Experimental Details}\label{sec:double_integrator_experimental_details}
The goal of model learning in this experiment is to learn $\phi_\theta(x, u)$ such that $f(x, u) \approx \hat{f}(x, u) + \phi_\theta(x, u)$ where $f$ is \eqref{eq:double_integrator_with_friction} and $\hat{f}$ is \eqref{eq:double_integrator_base}. A \textit{Uniformly Accurate} model will be learned using \eqref{eq:unconstrained_problem} along with a \textit{Sufficiently Accurate} model using the problem defined in Example \ref{example:selective_accuracy}. 
In the scenario defined by \eqref{eq:selective_accuracy}, $\mathbb{I}(s)$ is active in the region $\{(p, v)\in\mathbb{R}^2 : \norm{ [p,v]^\top}_{\infty}\leq 0.5 \}$ and $\epsilon_c = 0.035$. The constraint, therefore, enforces a high accuracy in the state space near the origin. 

The neural network, $\phi_\theta$, used to approximate the residual dynamics has two hidden layers. The first hidden layer has four neurons, while the second has two. Both hidden layers use a parametric rectified linear (PReLU) activation  \cite{he2015delving}. The input into the network is a concatenated vector of $\lbrack p_t, v_t, u_t \rbrack$. The output layer's weights are initially set to zero so before learning the residual error, the network will output zero. The dataset used to train both the sufficiently and uniformly accurate models is generated by uniformly sampling $15,000$ positions from [-2, 2], velocities from [-2.5, 2.5], and control inputs from [-10, 10]. The real model \eqref{eq:double_integrator_with_friction} is then used to obtain the true next state. Instead of simple gradient descent/ascent, ADAM \cite{kingma2014adam} is used as an update rule with $\alpha_\theta = 1\times 10^{-3}$ and $\alpha_{\lambda} = 1\times 10^{-4}$. Both models were trained in 200 epochs.

The models are then evaluated on how well it performs within a MPC controller defined in \eqref{eq:double_integrator_mpc}. This controller seeks to drive the system to the origin while obeying control constraints. The controller is solved using a Sequential Quadratic Programming solver \cite{Kraft1988ASP} with a time horizon of $T=10$. The models are evaluated in 200 different simulations where $x_{start}$ is drawn uniformly from $\lbrack -2, 2 \rbrack$.

\begin{equation}
\begin{aligned}
    \min_{\{x_t, u_t \}_{t=1}^\top} & \sum_{t=1}^\top |x_t| \\
    \text{s.t. } & |u_t| \leq 10,  t = 1, \hdots, T \\
    & x_{t+1} = \hat{f}(x_t, u_t) + \phi_\theta(x_t, u_t), t = 1, \hdots, T-1 \\
    & x_1 = x_{start} \\
    & \dot{x}_1 = 0
\end{aligned}
\label{eq:double_integrator_mpc}
\end{equation}

\begin{figure*}[h]
    \centering
    \includegraphics[width=0.9\textwidth]{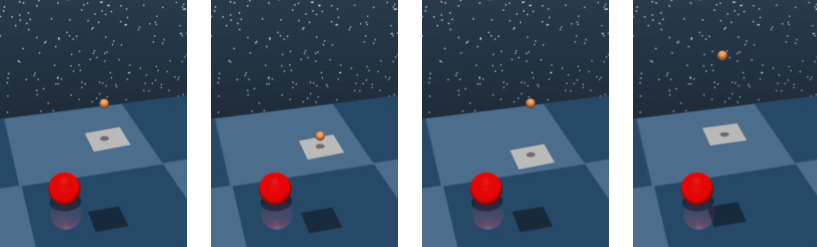}
    \caption{\textbf{Ball Bouncing Simulation: } The paddle seeks to bounce the orange ball above a target position on the xy plane, represented by the red ball. This figure shows a time sequence of a bounce from left to right.}
    \label{fig:ball_sequence}
\end{figure*}

\subsubsection{Results}
The sufficiently accurate formulation utilizes the prior knowledge that the model should be more accurate near the goal in order to stop efficiently. While the system is far from the origin, the control is simple, regardless of the friction; the controller only needs to know what direction to push in. A plot of the accuracy of both models is shown in Figure \ref{fig:double_integrator_model_errors} and summarized in Table \ref{table:double_integrator_state_space_errors}. It is noticeable that the \textit{Sufficiently Accurate} model has low average error near the origin, but suffers from higher average error outside of the region defined by $\mathbb{I}(s)$. This is the expected behavior. 

The performance of the controllers are summarized in Table \ref{table:double_integrator_controller_obj}.
Even though \textit{Sufficiently Accurate} has higher error outside of the constraint region and lower error within, it leads to lower costs when controlling the double integrator. The reason is shown in Figure \ref{fig:double_integrator_trajectory}, where the sufficiently accurate model may get to \textit{steady state} a bit slower but is able to control the overshoot better and not have oscillations near the origin. This is because the model is purposefully more accurate near the origin as it is more important for this task.

\begin{table}
  \centering
  \begin{tabular}{ | c | c | c |}
    \hline
    & $\sum_{t=1}^\top |x_t|$ & $\sum_{t=1}^\top |x_t| / |x_{start}|$ \\ \hline
    \cellcolor{black!20} Uniformly Accurate & \cellcolor{black!20} $5.51 \pm 3.46$ & \cellcolor{black!20} $5.66 \pm 0.89$ \\ \hline
    Sufficiently Accurate & $\bm{4.73 \pm 3.38}$ & $\bm{4.57 \pm 0.73}$ \\ \hline
  \end{tabular}
  \caption{\textbf{Controller performance: } This table shows the results of 200 trials of simulating the double integrator starting from different positions. Each entry shows the mean and one standard deviation. The first column shows the raw cost function of the MPC problem averaged over all trials. The second column shows an average normalized cost where each cost is normalized by the absolute value of the starting position. This is due to the fact that larger magnitude starting locations will have higher costs. }
  \label{table:double_integrator_controller_obj}
\end{table}

\begin{table}
  \centering
  \begin{tabular}{ | c | c | c | c| }
    \hline
    & All state space & $\mathcal{I}_K$ & $\mathcal{I}_K^C$ \\ \hline
    \cellcolor{black!20} Uniform & \cellcolor{black!20} $\bm{0.110 \pm 0.098}$ & $\cellcolor{black!20} 0.083 \pm 0.054$ & \cellcolor{black!20} $\bm{0.113 \pm 0.10}$ \\ \hline
    Sufficient & $0.136 \pm 0.126$ & $\bm{0.048 \pm 0.040}$ & $0.146 \pm 0.13$ \\ \hline
  \end{tabular}
  \caption{\textbf{Mean errors between the true model and the learned models in various state space subsets: } The error is the absolute difference between the true model and the learned models. Each entry shows the mean and one standard deviation. The first column shows the average error for the whole state space. The second column shows the average error for the state space near the origin, while the third column shows the average error for the complement of that set (states far from the origin). The errors are evaluated on a test set not seen during training.}
  \label{table:double_integrator_state_space_errors} 
\end{table}

\subsubsection{Convergence Experiments}
The double integrator is a simple system. This enables running more comprehensive tests to experimentally show some aspects of Theorem \ref{thm:empirical_duality_gap}. For this particular system, we will run one more experiment where 4 different neural network architectures were used. Each network has two hidden layers with PReLU activation, where the only difference is in the number of neurons in each layer. Denoting a network as (number of neurons in first layer, number of neurons in second layer), the network sizes used are: (2, 1), (4, 2), (8, 4), (16, 8). A set of values of the number of samples, $N$, are also chosen: $\{100, 1000, 5000, 10000, 15000 \}$. For each $N$, 15 random datasets are sampled, and each neural network is trained with each dataset using the Sufficiently Accurate objective described in Section \ref{sec:double_integrator_experimental_details}. There is one minor difference in how the data is collected; a zero mean Gaussian noise with $\sigma=0.2$ is added to $v_{t+1}$. With noisy observations of velocity, the optimal model that can be learned for \eqref{eq:functional_primal} will have an objective value of $P^\star = 0.04$. The results of training each neural network model with each random dataset is shown in Figure \ref{fig:double_integrator_converging_means}. Each boxplot in the figure shows the distribution of the final value of the Lagrangian, $\mathcal{L}_N$, at the ending of training. This is an approximation of $D_N^\star$. The primal-dual algorithm may not be able to solve for the optimal $D_N^\star$, but the expectation is that for a simple problem like double integrator, the solution is somewhat close. In fact, Figure \ref{fig:double_integrator_converging_means} shows that with increasing model sizes and larger $N$, the distribution of the solutions appear to be converging to $P^{\star}$. Note that the figure shows the value of the Lagrangian with training data. Thus for small $N$, networks can overfit and have a near zero Lagrangian value. When increasing $N$, the networks have less of a chance to overfit to the training data.

\subsection{Ball Bouncing} \label{sec:ball_bouncing_experiment}

\begin{figure}[h]
	\centering
	\includegraphics[width=0.475\textwidth]{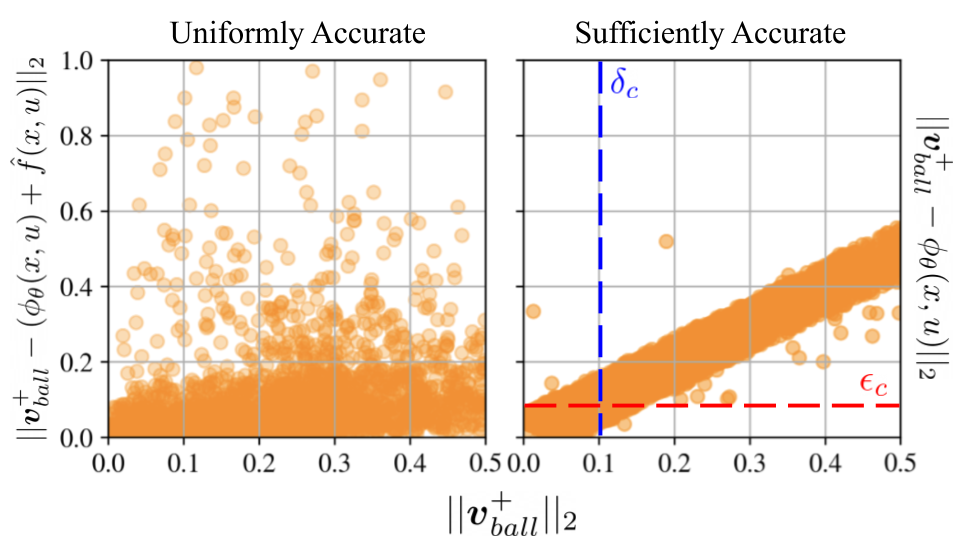}
	\caption{\textbf{Model errors vs. velocity magnitude: } The scatter plot shows the distribution of model errors versus the magnitude of the velocity of the true result. Both the sufficiently and uniformly accurate models are evaluated using a validation set that is not used during training. The blue dotted line represents the boundary of where the constraint set is and the red dotted line represents the boundary of the constraint function.}
	\label{fig:ball_model_errs}
\end{figure}

\begin{figure}[h]
	\centering
	\includegraphics[width=0.475\textwidth]{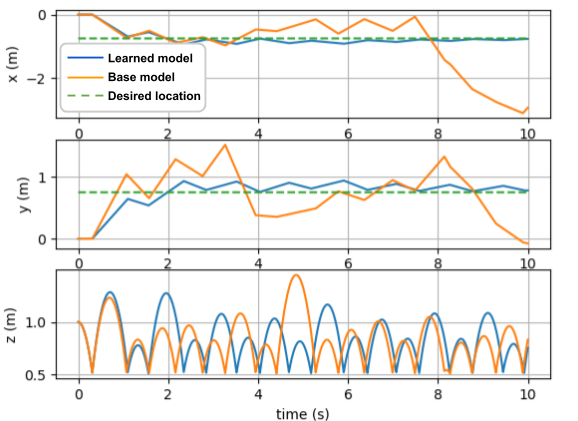}
	\caption{\textbf{Ball bouncing trajectory: } The (x, y z) trajectory of the ball is plotted for both the base model (with wrong parameters) and a learned model using the sufficiently accurate objective. This plot shows that the base model is not sufficient by itself to bounce the ball at a desired location.}
	\label{fig:ball_trajectory}
\end{figure}

\subsubsection{Introduction}
This experiment involves bouncing a ball on a paddle as in Figure \ref{fig:ball_sequence}. 
The ball has the state space $x = \lbrack \boldsymbol{p}_{ball}, \boldsymbol{v}_{ball} \rbrack$,
where $\boldsymbol{p}_{ball}$ is the three-dimensional position of the ball, $\boldsymbol{v}_{ball}$ is the three-dimensional velocity of the ball. The control input is $u = \lbrack \boldsymbol{v}_{paddle}, \boldsymbol{n}\rbrack$ where $\boldsymbol{v}_{paddle}$ is the velocity of the paddle at the moment of impact with the ball and $\boldsymbol{n}$ is the normal vector of the paddle, representing its orientation. This control input is a high level action and is realized by lower level controllers that attempt to match the velocity and orientation desired for the paddle.
A basic model of how the velocity of the ball changes during collision is
\begin{equation}
\label{eq:ball_base_model}
\begin{aligned}
    \boldsymbol{v}_{ball}^+ &= \alpha_{r} (\boldsymbol{v}_{rel}^- - 2\boldsymbol{n}(\boldsymbol{n} \cdot \boldsymbol{v}_{rel}^-)) + \boldsymbol{v}_{paddle} \\
    \boldsymbol{v}_{rel}^- &= (\boldsymbol{v}_{ball}^- - \boldsymbol{v}_{paddle})
\end{aligned}
\end{equation}
where the superscript $-$ refers to quantities before the paddle-ball collision and the superscript $+$ refers to quantities after the paddle-ball collision (the paddle velocity and orientation are assumed to be unchanged during and directly after collision). $\alpha_r$ is the coefficient of restitution. In this experiment, a neural network is tasked to learn the model of how the ball velocity changes, i.e. \eqref{eq:ball_base_model}.

\subsubsection{Experimental Details}
First, a neural network is trained without knowledge of any base model of how the ball bounces. This will be denoted as learning a \textit{full} model as opposed to a \textit{residual} model. This network is trained two ways, with the \textit{Uniformly Accurate} problem \eqref{eq:unconstrained_problem} as well as the \textit{Sufficiently Accurate} problem realized in Example \ref{example:normalized_objective}. The constants used in Example \ref{example:normalized_objective} are defined here as $\epsilon_c = 0.1$ and $\delta_c = 0.1$. 

A second neural network is trained for both the uniformly and sufficiently accurate formulations that utilizes the base model, $\hat{f}(x, u)$ given in \eqref{eq:ball_base_model} to learn a residual error. In the base model, the coefficient of restitution, $\alpha_r$, is wrong and the control $\boldsymbol{n}$ has a constant bias where a rotation of 0.2 radians is applied to the \textit{y-axis}. This is to simulate a robot arm picking up the paddle and not observing the rotation from the hand to the paddle correctly. 

The neural network used for all models has 2 hidden layers with 128 neurons in each using the pReLU activation. The input into the network is the the state of the ball and the control input at time of collision, $\lbrack x^-, u \rbrack$, and it outputs the ball velocity after the collision, $\boldsymbol{v}_{ball}^+$.  The network was trained using the ADAM optimizer with an initial learning rate of $10^{-3}$ for both the primal and dual variables. The data used for all model training was gathered by simulating random ball bounces in MuJoCo for the equivalent of 42 minutes in real life. 

All learned models are then evaluated with how well a controller utilizes them. The controller will attempt to bounce the ball at a specific $xy$ location. This is represented through the following optimization problem that the controller solves
\begin{equation}
\label{eq:ball_paddle_opt}
\begin{aligned}
&\min_{ {u}} \abs{ loc(\phi_\theta( {x,u})) -  {loc}_{desired} } \\
& \text{s.t. }  {roll}_{min} \leq  roll \leq  {roll}_{max} \\
& \quad {pitch}_{min} \leq  pitch \leq  {pitch}_{max}  \\
& \quad \boldsymbol{v}_{min} \leq \abs{\boldsymbol{v}_{rel} } \leq \boldsymbol{v}_{max}
\end{aligned}
\end{equation}
where $loc(\cdot)$ is a function that maps the velocity of the ball to the $xy$ location it will be in when it falls back to its current height. $roll$ and $pitch$ are both derived from the paddle normal $\boldsymbol{n}$. $\lbrack {loc}_{desired}, roll_{min}, roll_{max}, pitch_{min}, pitch_{max}, v_{min}, v_{max} \rbrack$ are parameters of the controller that can be chosen. The system and controller is then simulated in MuJoCo \cite{todorov2012mujoco} using libraries from the DeepMind Control Suite \cite{tassa2018deepmind}.

Each model is evaluated 500 different times for varying controller parameters. $loc_{desired}$ is uniformly distributed  in the region $\{(x, y) | -1m \leq x \leq 1m, -1m \leq y \leq 1m\}$, $\boldsymbol{v}_{min}$ uniformly sampled from the interval $[3m/s, 4m/s)$, and $\boldsymbol{v}_{max}$ is selected to be above $\boldsymbol{v}_{min}$ by between $1m/s$ to $2m/s$. 

\subsubsection{Results}
A plot of the model errors are shown in Figure \ref{fig:ball_model_errs}. While the uniformly accurate model has errors that are distributed more or less uniformly across all magnitudes of ball velocity, the sufficiently accurate model has a clear linear relationship. This is expected from the normalized objective that is used which penalizes errors based on large the velocity of the ball is. Therefore, larger velocities can have larger errors with the same penalty as smaller velocities with small errors. 
\begin{figure*}[h]
    \centering
    \includegraphics[width=0.95\textwidth]{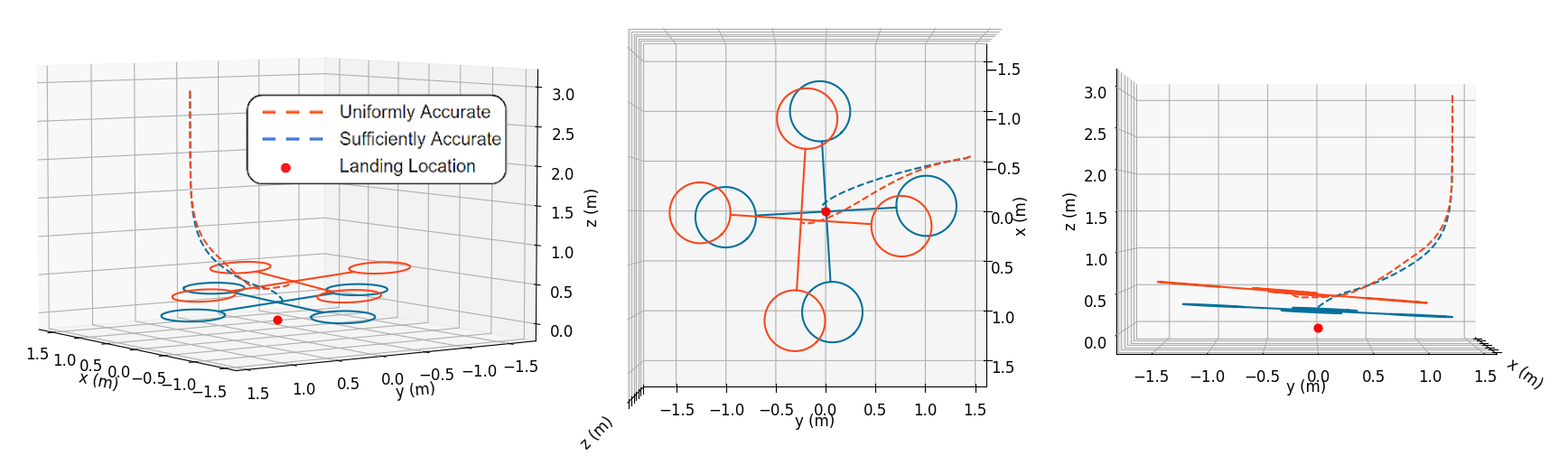}
    \caption{\textbf{3D Quadrotor Simulation: } Trajectory of the Sufficiently Accurate and Uniformly Accurate models used in a MPC controller (best viewed in color). The goal is to land at a precise point represented by the red dot. Each plot is a different view of the same data. The dotted lines represent the trajectory of the center of mass of the vehicle for 50 time steps. The state of the quadrotor at the last time step is drawn.}
    \label{fig:quad3d_traj}
\end{figure*}

The results of running each model with the controller 500 times is shown in Table \ref{table:ball_controller_errs}. The error characteristics of the \textit{Sufficiently Accurate} model (Figure \ref{fig:ball_model_errs}) allow it to out perform its \textit{Uniformly Accurate} counterpart with both a full model and a residual model. For the full model, the uniformly accurate problem yields a failure rate of over $20\%$ while the sufficiently accurate problem yields a failure rate of under $1\%$. Here, failure means the paddle fails to keep the ball bouncing. For the residual model, neither model failed because the base model provides a decent guess (though the base model by itself is not good enough for control, see Figure \ref{fig:ball_trajectory}). The sufficiently accurate model still provided better mean errors.

We hypothesize that the large errors spread randomly across the Uniform model leads to high variance estimates of the output given small changes in the input. For optimizers that use gradient information, this leads to a poor estimate of the gradient. For optimizers that are gradient free, this still causes problems due to the high variance of the values themselves. 

\begin{table}[h]
	\centering
	\begin{tabular}{c|c|c|c|}
		\cline{2-4}
		& & Uniform & Sufficient \\
		\cline{2-4}
		\multirow{2}{*}{Full Model}&Failure &$20.8 \%$ & $0.8 \%$\\ \cline{2-4}
								   &Mean error & $0.3136$ & \textbf{0.2124} \\
								   \cline{2-4}
								   \cline{2-4}
		\multirow{2}{*}{Residual Model}& \cellcolor{black!20} Failure & \cellcolor{black!20}$0 \%$ & \cellcolor{black!20} $0 \%$\\  \cline{2-4}
								   & \cellcolor{black!20} Mean error & \cellcolor{black!20} $0.164$ & \cellcolor{black!20} \textbf{0.156} \\
								   \cline{2-4}
	\end{tabular}
	\caption{\textbf{Ball bouncing controller performance: }Results of 500 trials of ball bouncing for each model. There is a full and residual model trained for both the uniformly accurate and sufficiently accurate model learning problems.}
	\label{table:ball_controller_errs}
\end{table}

\subsection{Quadrotor with ground effects} \label{sec:quadrotor_experiments}

\subsubsection{Introduction}
The last experiment deals landing a quadrotor while undergoing disturbances from ground effect. This disturbance occurs when a quadrotor operates near surfaces which can change the airflow \cite{sanchez2017characterization}. The state for the quadrotor model is a 12 degree of freedom model which consists of $x = \lbrack \boldsymbol{p}, \boldsymbol{v}, \boldsymbol{q}, \boldsymbol{\omega} \rbrack$ where $\boldsymbol{p} \in \mathbb{R}^3$ is position of the center of mass, $\boldsymbol{v}  \in \mathbb{R}^3$ is the center of mass velocity, $\boldsymbol{q} \in SO(3)$ is a quaternion that represents the orientation the quadrotor, and $\boldsymbol{w}  \in \mathbb{R}^3$ is the angular velocity expressed in body frame. The control input is $u = \lbrack u^{(1)}, u^{(2)}, u^{(3)}, u^{(4)} \rbrack$ where $u^{(i)}$ is the force from the $i^{\text{th}}$ motor.
The base model of the quadrotor, $\hat{f}(x, u)$ is as follows
\begin{equation}
\begin{aligned}
    \begin{bmatrix} \boldsymbol{p}_{t+1} \\ \boldsymbol{v}_{t+1} \\ \boldsymbol{q}_{t+1} \\ \boldsymbol{\omega}_{t+1} \end{bmatrix} &= 
    \begin{bmatrix} \boldsymbol{p}_{t} + \dot{\boldsymbol{p}_t} \Delta t  \\ \boldsymbol{v}_{t} + \dot{\boldsymbol{v}_t}\Delta_t \\ \frac{\boldsymbol{q}_{t} + \dot{\boldsymbol{q}_t}\Delta_t}{\norm{\boldsymbol{q}_{t} + \dot{\boldsymbol{q}_t}\Delta_t}_2} \\ \boldsymbol{\omega}_{t} + \dot{\boldsymbol{\omega}_t}\Delta_t \end{bmatrix}\\
    \begin{bmatrix} \dot{\boldsymbol{p}} \\ \dot{\boldsymbol{v}} \\ \dot{\boldsymbol{q}} \\ \dot{\boldsymbol{\omega}} \end{bmatrix} &= 
    \begin{bmatrix}
    \boldsymbol{v}\\
    \boldsymbol{q} \otimes \lbrack 0, 0, \sum_{i=1}^4 u^{(i)} / m \rbrack^\top \otimes \boldsymbol{q}^{-1} - \lbrack 0, 0, 9.81 \rbrack^\top\\
    \frac{1}{2} \boldsymbol{\omega} \otimes \boldsymbol{q}\\
    \mathcal{I}^{-1} (\boldsymbol{T} - \boldsymbol{\omega} \times (\mathcal{I} \boldsymbol{\omega}))
    \end{bmatrix} \\
    \boldsymbol{T} &= 
    \begin{bmatrix}
    u^{(4)} - u^{(2)}\\
    u^{(3)} - u^{(1)} \\
    (u^{(1)} + u^{(3)}) - (u^{(2)} + u^{(4)})
    \end{bmatrix}
\end{aligned}
\end{equation}
where $m$ is the total mass of the quadrotor (set to be 1kg for all experiments) and $\mathcal{I}$ is inertia matrix around the principle axis (set to be identity for all experiments). The $\times$ symbol represents cross product, and $\otimes$ represents quaternion multiplication. When using $\otimes$ between a vector and a quaternion, the vector components are treated as the imaginary components of a quaternion with a real component of 0. The discrete model normalizes the quaternion for each state update so that it remains a unit quaternion. The body frame of the quadrotor is such that the x axis aligns with one of the quadrotor arms, and the z axis points ``up."

The true model used in simulation adds disturbances to the force on each propeller, but is otherwise the same as the base model:
\begin{equation}
    f(x, u) = \hat{f}(x, h(x, u))
\end{equation}
where $h : \mathbb{R}^n \times \mathbb{R}^p \rightarrow \mathbb{R}^p$ is the ground effect model. In this experiment we provide a simplified model of ground effects where each motor has independence disturbances. The $i^{\text{th}}$ output of the ground effect model, $h_i(x, u)$ is
\begin{equation}
\begin{aligned}
    h_i(x, u) &= u^{(i)} (1 + K_{ground})\\
    K_{ground} &= (\lbrack 1 - \frac{h_{prop}}{h_{max}} \rbrack_+)  (\frac{4\lbrack\theta_{ground} - \frac{\pi}{2}\rbrack_+^2}{\pi^2}) \alpha 
\end{aligned}
\end{equation}
where $h_{prop}$ is height of the propeller above the ground (not the height of the center of mass), $h_{max}$ is a constant that determines the height at which the ground effect is no longer in effect. $\theta_{ground}$ is the angle between the unit vector aligned with the negative $z$ axis of the quadrotor and the unit vector $\lbrack0, 0, -1\rbrack$. $\alpha$ is a number in the set $\lbrack 0, 1 \rbrack$ that represents the maximum fraction of the propeller's generated force that can be added as a result of ground effect. As a reminder, the $\lbrack \cdot \rbrack_+$ operator projects its arguments onto the positive orthant. A visualization of $K_{ground}$ is shown in Figure \ref{fig:quad3d_ground_effect}. In the experiments, $h_{max}=1.5$, $\alpha = 0.5$.

\begin{figure}[h]
	\centering
	\includegraphics[width=0.475\textwidth]{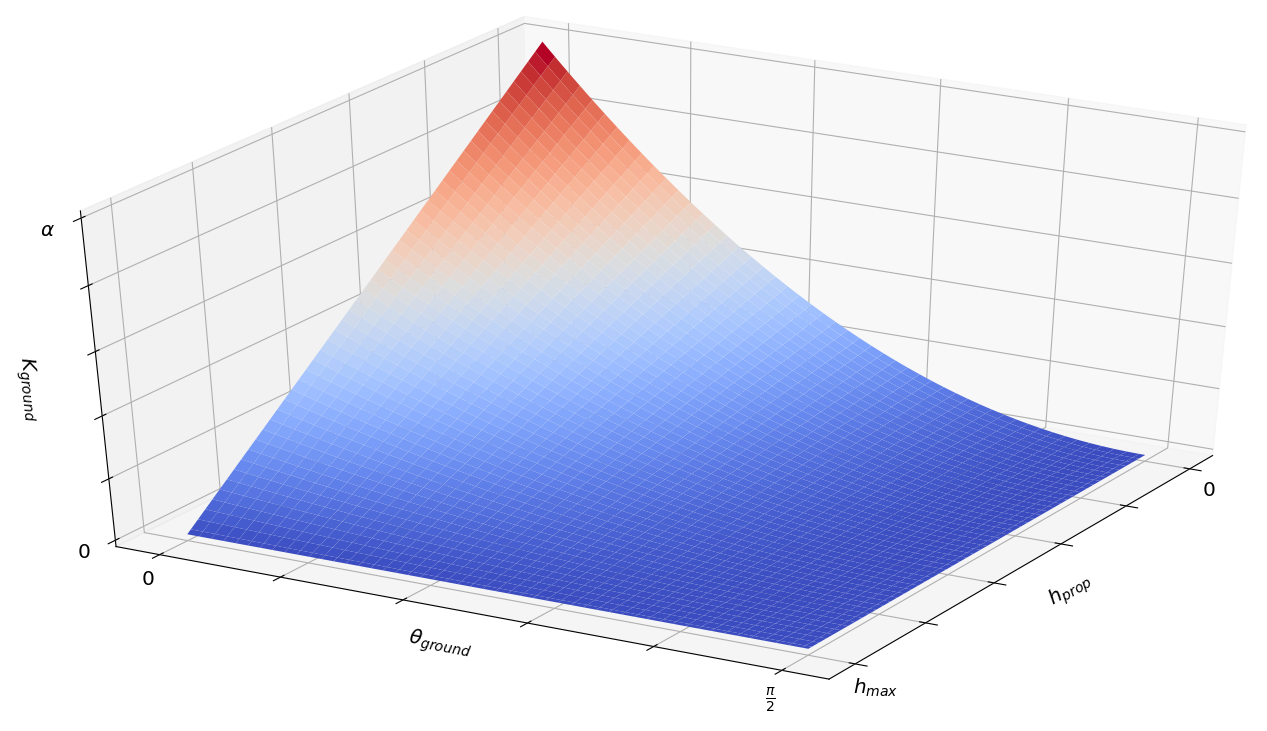}
	\caption{\textbf{Ground effect: } A visualization of $K_{ground}$ as a function of $\theta_{ground}$ and $h_{prop}$. }
	\label{fig:quad3d_ground_effect}
\end{figure}

\subsubsection{Experimental Details}
The \textit{Sufficiently Accurate} model trained using the problem presented in Example \ref{example:selective_accuracy}, where $\epsilon_c = 0.001$ and the indicator $\mathbb{I}(s)$ is active when the height of the quadrotor is less than $1.5$. 
A \textit{Uniformly Accurate} and \textit{Sufficiently Accurate} model are trained to learn the residual error between $f(x, u)$ and $\hat{f}(x, u)$. Both models use a neural network with 2 hidden layers of 16 and 8 neurons each with pReLU activation. The update for primal and dual variables used ADAM with $\alpha_\theta = 1\times 10^{-3}$ and $\alpha_{\lambda} = 1\times 10^{-4}$, and both models trained using $3,000$ epochs.
The training data consists of $10,000$ randomly sampled quadrotor states. The $(x,y)$ positions of the quadrotor were uniformly sampled from $\lbrack -10, 10 \rbrack$. The $z$ positions of the quadrotor were uniformly sampled from $\lbrack 0.1, 5 \rbrack$. Linear velocities components were uniformly sampled from $\lbrack -6, 6 \rbrack$. Angular velocities components were uniformly sampled from $\lbrack -2.5, 2.5 \rbrack$. Control inputs for each motor are sampled from $\lbrack 0, 10 \rbrack$. Quaternions are sampled by sampling random unit vectors along with a random angle in $\lbrack -0.4 rad, 0.4 rad\rbrack$. This angle-axis rotation is transformed into a quaternion.

Both models are tested by sampling a random starting location and asking the quadrotor to land at the origin. The controller used for landing is an MPC controller that repeatedly solves the following problem
\begin{equation}
\begin{aligned}
&\min_{\{u_t, x_t \}_{t=1}^T} \sum_{t=1}^T \norm{ \boldsymbol{p}_t - \boldsymbol{p}_{target}}_2 \\
&\text{s.t.  } 0 \leq u_t \leq 10, \forall t\\
&\quad x_{t+1} = \hat{f}(x_t, u_t) + \phi_\theta(x_t, u_t), t=1, \hdots, T-1\\
&\quad h_{t, com} \geq 0.2, \forall t\\
&\quad \abs{q_w - 1} \leq 0.05\\
\end{aligned}
\label{eq:quad3d_mpc}
\end{equation}
where $\boldsymbol{p}$ is the position of the quadrotor, $h_{t, com}$ is the height of the center of mass, and $q_w$ is the real component of the quaternion at the last time step. This problem encourages reaching a target, subject to control and dynamics constraint. It also has a constraint on the height of the quadrotor so it is always above a certain small altitude, and an orientation constraint on the last time step so it is mostly upright when it lands. 

\subsubsection{Results}
The results of running this controller over several different starting locations is shown in Table \ref{table:quad3d_results}. Similar to previous experiments, the \textit{Sufficiently Accurate} model has a higher loss overall, but better accuracy in the constrained area which is more important to the task. This allows the controller to utilize the higher accuracy to land the quadrotor precisely. An example of one of the landing trajectories is shown in Figure \ref{fig:quad3d_traj}. It can be seen that the Sufficiently Accurate model can more precisely land at the origin $(0, 0)$. It also is able to reach the ground faster, as it can more accurately compensate for the extra force caused by the ground surface. The ground effect can also disturb roll and pitch maneuvers which can offset the center of mass as well.

\begin{table}
  \centering
  \begin{tabular}{  | c | c | c | }
    \hline
    & Sufficiently Accurate & Uniformly Accurate \\
    \hline 
    \cellcolor{black!20} MPC cost & \cellcolor{black!20} $\bm{4.21 \pm 1.88}$ & \cellcolor{black!20} $4.55 \pm 1.90$ \\
    \hline
    $\mathbb{E}\lbrack l(s, \phi_\theta) \rbrack$ & $6.52 \times 10^{-4}$ & $\bm{6.16 \times 10^{-4}}$\\
    \hline
    \cellcolor{black!20} $\mathbb{E}\lbrack g(s, \phi_\theta) \mathbb{I}(s)\rbrack$ & \cellcolor{black!20} $\bm{1.00 \times 10^{-4}}$ & \cellcolor{black!20} $8.28 \times 10^{-4}$ \\
    \hline
  \end{tabular}
  \caption{\textbf{Quadrotor Results: } The first row shows the MPC cost average \eqref{eq:quad3d_mpc} and one standard deviation over 5 runs. The second row shows the training loss, while the third row shows the constraint function. The expected errors are evaluated on a test set not seen during training.}
  \label{table:quad3d_results}
\end{table}

\section{Conclusion}\label{sec:conclusion}
This paper presents \textit{Sufficiently Accurate} model learning as a way to incorporate prior information about a system in the form of constraints. In addition, it proves that this constrained learning formulation can have arbitrarily small duality gap. This means that existing methods like primal-dual algorithms can find decent solutions to the problem. 

With good constraints, the model and learning method can focus on important aspects of the problem to improve the performance of the system even if the overall accuracy of the model is lower. These constraints can come from robust control formulations or from knowledge of sensor noise. Some important questions to consider when using this method is how to choose good constraints. For some systems and tasks, it can be simple while for others, it can be quite difficult. This objective is not useful for all tasks and systems but rather for a subset of tasks and systems where prior knowledge is more easily expressed as a constraint.

\section*{Acknowledgements}
This material is based upon work supported by the National Science Foundation Graduate
Research Fellowship Program under Grant No. DGE-1321851. Any opinions,
findings, and conclusions or recommendations expressed in this material are those of the
authors and do not necessarily reflect the views of the National Science Foundation.

\FloatBarrier

\appendices
\section{Expectation-wise Lipschitz-Continuity of Loss functions}\label{appendix:continuity}

\subsection{$l(s, \phi) = \norm{\phi(x_t, u_t) - x_{t+1}}_2$}
The loss function $l(s, \phi) = \norm{\phi(x_t, u_t) - x_{t+1}}_2$ is expectation-wise Lipschitz-continuous in $\phi$.

Using the reverse triangle inequality, we get

\begin{equation}
\begin{aligned}
    &\norm{l(s, \phi_1) - l(s, \phi_2)}_\infty = \\
    &\quad \norm{\norm{\phi_1(x_t, u_t) - x_{t+1}}_2 - \norm{\phi_2(x_t, u_t) - x_{t+1}}_2}_\infty\\
    &\quad \leq \norm{ \norm{\phi_1(x_t, u_t) - \phi_2(x_t, u_t)}_2 }_\infty.
\end{aligned}
\end{equation}
By, the equivalence of norms, there exists $L$ such that $\norm{\phi_1(x, u) - \phi_2(x, u)}_2 \leq L \abs{\phi_1(x, u) - \phi_2(x, u)}$, for all $(x, u)$.
Thus,
\begin{equation}
    \norm{l(s, \phi_1) - l(s, \phi_2)}_\infty \leq L \norm{\phi_1 - \phi_2}_\infty
\end{equation}
Since this is true for any $s$, it is also true in expectation.
\begin{equation}
    \mathbb{E}_s \norm{l(s, \phi_1) - l(s, \phi_2)}_\infty \leq L \mathbb{E}_s \norm{\phi_1 - \phi_2}_\infty
\end{equation}

\subsection{$l(s, \phi) = \frac{\norm{ \phi(x_t, u_t) - x_{t+1} }_2}{ \norm{ x_{t+1} }_2} \mathbb{I}_{A}(s)$}
The loss function $l(s, \phi) = \frac{\norm{ \phi(x_t, u_t) - x_{t+1} }_2}{ \norm{ x_{t+1} }_2} \mathbb{I}_{A}(s)$ is also expectation-wise Lipschitz-continuous in $\phi$.
Following the same logic as for the euclidean norm, we get
\begin{equation}
\norm{l(s, \phi_1) - l(s, \phi_2)}_\infty \leq L \norm{ \frac{\mathbb{I}_A(s)}{\norm{x_{t+1}}_2} (\phi_1 - \phi_2)}_\infty 
\end{equation}
This reduces to
\begin{equation}
\norm{l(s, \phi_1) - l(s, \phi_2)}_\infty \leq L \norm{ \frac{1}{\min_{x_{t+1}} \norm{x_{t+1}}_2} (\phi_1 - \phi_2)}_\infty 
\end{equation}
when considering the case where $s \in A$. The largest that $\frac{\mathbb{I}_A(s)}{\norm{x_{t+1}}_2}$ can be is $1$ over the smallest value of $\norm{x_{t+1}}_2$. If $s \not \in A$, both sides reduce to 0, as the indicator variable is 0. This leads to
\begin{equation}
\begin{aligned}
\norm{l(s, \phi_1) - l(s, \phi_2)}_\infty &\leq L \norm{ \frac{1}{\epsilon} (\phi_1 - \phi_2)}_\infty \\
\mathbb{E} \norm{l(s, \phi_1) - l(s, \phi_2)}_\infty &\leq \frac{L}{\epsilon} \mathbb{E} \norm{\phi_1 - \phi_2}_\infty
\end{aligned}
\end{equation}

\section{Equivalent Problem Formulation} \label{appendix:equivalent_formulation}
Using the notation in this paper, the problem defined in \cite{ribeiro2012optimal} is
\begin{equation}
\label{eq:problem_translation_1}
\begin{aligned}
P^\star = &\max_{\boldsymbol{x}, \phi} f_0 (\boldsymbol{x})\\
&\text{s.t. } \boldsymbol{x} \leq \mathbb{E} \lbrack \boldsymbol{f_1}(s, \phi(s)) \rbrack \\
&\quad \boldsymbol{f_2}(\boldsymbol{x}) \geq 0, \boldsymbol{x} \in \mathcal{X}, \phi \in \Phi
\end{aligned}
\end{equation}
where $f_0$, and $\boldsymbol{f_2}$ are concave functions. $\boldsymbol{f_1}$ is not necessarily convex with respect to $\phi$. $\mathcal{X}$ is a convex set and $\Phi$ is compact. Note that $f_0$, $\boldsymbol{f_1}$, $\boldsymbol{f_2}$, $\boldsymbol{x}$, and $\mathcal{X}$ are not directly present in the \textit{Sufficiently Accurate} problem.

To translate the problem, let us assume that $x$ is $K+1$ dimensional, where $K$ is the number of constraints in \eqref{eq:functional_primal}. Let the last element of $\boldsymbol{f_1}(s, \phi)$ be equal to $-l(s, \phi) \mathbb{I}_0(s)$, the negative of the objective function in the \textit{Sufficiently Accurate} problem. Let the first $k^{\text{th}}$ element of $\boldsymbol{f_1}(s, \phi)$ be equal to $-g_k(s, \phi) \mathbb{I}_k(s)$, the negative of a constraint function in \eqref{eq:functional_primal}. Set the objective function to be $f_0(\boldsymbol{x}) = \boldsymbol{x}_{K+1}$, where $\boldsymbol{x}_{K+1}$ is the $(K+1)^{\text{th}}$ element of $\boldsymbol{x}$. $\boldsymbol{f_2}$ can be ignored by setting it to be the zero function. Under these assumptions, \eqref{eq:problem_translation_1} is equivalent to the following

\begin{equation}
\label{eq:problem_translation_2}
\begin{aligned}
    P^\star = &\max_{\boldsymbol{x}, \phi} \boldsymbol{x}_{K+1}\\
    &\text{s.t.  }  \boldsymbol{x}_k \leq -\mathbb{E} \lbrack g_k(s, \phi) \mathbb{I}_k(s) \rbrack, k=1, \hdots, K \\
    &\phantom{s.t.  } \boldsymbol{x}_{K+1} \leq -\mathbb{E}\lbrack l(s, \phi) \mathbb{I}_0(s) \rbrack \\
    &\phantom{s.t.  } \boldsymbol{x} \in \mathcal{X}, \phi \in \Phi.
\end{aligned}
\end{equation}
Now, define the set $\mathcal{X} = \{(x_1, x_2, \hdots, x_{K+1}) : x_k=0, k = 1, \hdots, K, x_{K+1} \in \mathcal{X}_{K+1} \}$, where $\mathcal{X}_{K+1}$ is an arbitrary compact set in one dimension. This set of vectors, $\mathcal{X}$, is a set that is 0 in the first $K$ components, and is compact in the last component. This, will further simplify \eqref{eq:problem_translation_2} to the following
\begin{equation}
\label{eq:problem_translation_3}
\begin{aligned}
    P^\star = &\max_{\phi}  -\mathbb{E}\lbrack l(s, \phi) \mathbb{I}_0(s) \rbrack\\
    &\text{s.t.  }  0 \leq -\mathbb{E} \lbrack g_k(s, \phi) \mathbb{I}_k(s) \rbrack, k=1, \hdots, K \\
    &\phantom{s.t.  } \phi \in \Phi
\end{aligned}
\end{equation}
as long as $l$ takes on values in a compact set. Finally, flipping all the negatives in \eqref{eq:problem_translation_3},
\begin{equation}
\label{eq:problem_translation_4}
\begin{aligned}
    P^\star = - &\min_{\phi \in \Phi} \mathbb{E}\lbrack l(s, \phi) \mathbb{I}_0(s) \rbrack\\
    &\text{s.t.  }  \mathbb{E} \lbrack g_k(s, \phi) \mathbb{I}_k(s) \rbrack \leq 0, k=1, \hdots, K \\
\end{aligned}
\end{equation}
This completes the translation of problem \eqref{eq:problem_translation_1} to \eqref{eq:functional_primal}.

\section{Proof of Theorem \ref{thm:eisen_parameterization}} \label{appendix:eisen_proof}
This proof follows some of the steps of the proof for \cite[Theorem 1]{eisen2018learning}.
Let $(\phi^\star, \lambda^\star)$ be the primal and dual variables that attains the solution value of $P^\star = D^\star$ in problems \eqref{eq:functional_primal} and \eqref{eq:functional_dual}. Similarly, let $(\theta^\star, \lambda_\theta^\star)$ be the primal and dual variables that attain the solution value $D_\theta^\star$ in problem \eqref{eq:parameterized_dual}. $\phi_{\theta^\star}$ is the function that $\theta^\star$ induces. Note that the optimal dual variables for \eqref{eq:functional_dual}, $\lambda^\star$, are not necessarily the same as the optimal dual variables for \eqref{eq:parameterized_dual}, $\lambda_\theta^\star$.

\subsection{Lower Bound}
We first show the lower bound for $D_\theta^\star$.
Writing out the dual problem \eqref{eq:functional_dual}, we obtain
\begin{equation}
    D^\star = \max_{\lambda \geq 0} \min_{\phi \in \Phi} \mathcal{L}(\phi, \lambda) = \mathcal{L}(\phi^\star, \lambda^\star)
\end{equation}
Since $\lambda^\star$ is the optimal dual variable that achieves the maximal value for the maximization and minimization for the Lagrangian, it is true that
\begin{equation}
    \phi^\star = \argmin_{\phi} \mathcal{L}(\phi, \lambda^\star).
\end{equation}
Thus for any $\phi$,
\begin{equation}
\label{eq:param_thm_1}
    \mathcal{L}(\phi^\star, \lambda^\star) \leq \mathcal{L}(\phi, \lambda^\star), \forall \phi \in \Phi.
\end{equation}
We now look at the parameterized dual problem \eqref{eq:parameterized_dual}.
\begin{equation}
    D_\theta^\star = \max_{\lambda \geq 0} \min_{\theta \in \Theta} \mathcal{L}_\theta(\theta, \lambda)
    = \max_{\lambda \geq 0} \min_{\theta \in \Theta} \mathcal{L}(\phi_\theta, \lambda)
\label{eq:param_thm_1_5}
\end{equation}
This simply redefines $\mathcal{L}_\theta$ in terms of $\mathcal{L}$ as the only difference is that $\mathcal{L}_\theta$ is only defined for a subset of the primal variables that $\mathcal{L}$ is defined for.
By definition, $\lambda_\theta^\star$ maximizes the minimization of $\mathcal{L}_\theta$ over $\theta$. That is to say for the dual solution $(\theta^\star, \lambda_\theta^\star)$, $\lambda_\theta^\star$ minimizes $\mathcal{L}(\phi_{\theta^\star}, \cdot)$
\begin{equation}
\begin{aligned}
    \lambda_\theta^\star &= \argmax_{\lambda \geq 0} \min_{\theta \in \Theta} \mathcal{L}(\phi_\theta, \lambda)\\
    &= \argmax_{\lambda \geq 0} \mathcal{L}(\phi_{\theta^\star}, \lambda).
\end{aligned}
\end{equation}
Thus, for all $\lambda$, it is the case that
\begin{equation}
\label{eq:param_thm_2}
    \mathcal{L}(\phi_{\theta^\star}, \lambda_\theta^\star)  \geq \mathcal{L}(\phi_{\theta^\star}, \lambda)
\end{equation}
Putting together \eqref{eq:param_thm_1} and \eqref{eq:param_thm_2}, we obtain
\begin{equation}
\begin{aligned}
    D_\theta^\star = \mathcal{L}(\phi_{\theta^\star}, \lambda_\theta^\star) &\geq \mathcal{L}(\phi_{\theta^\star}, \lambda^\star) \geq \mathcal{L}(\phi^\star, \lambda^\star) = D^\star
\end{aligned}
\end{equation}

\subsection{Upper Bound}
Next, we show the upper bound for $D_\theta^\star$. We begin by writing the Lagrangian \eqref{eq:parameterized_lagrangian}
\begin{equation}
    D_\theta^\star = \max_{\lambda \geq 0} \min_{\phi \in \Phi_\theta } \mathcal{L}(\phi_{\theta}, \lambda)
\end{equation}
as previously written in \eqref{eq:param_thm_1_5}. By adding and subtracting $\mathcal{L}(\phi, \lambda)$, we obtain
\begin{equation}
\label{eq:param_thm_3}
\begin{aligned}
    D_\theta^\star &= \max_{\lambda \geq 0} \min_{\theta \in \Theta } \mathcal{L}(\phi, \lambda) + \mathcal{L}(\phi_{\theta}, \lambda) - \mathcal{L}(\phi, \lambda)\\
    &= \max_{\lambda \geq 0} \lbrack \mathcal{L}(\phi, \lambda) + \min_{\theta \in \Theta } \lbrack \mathcal{L}(\phi_{\theta}, \lambda) - \mathcal{L}(\phi, \lambda) \rbrack \rbrack \\
    &\leq \max_{\lambda \geq 0} \lbrack \mathcal{L}(\phi, \lambda) + \min_{\theta \in \Theta } \abs{ \mathcal{L}(\phi_{\theta}, \lambda) - \mathcal{L}(\phi, \lambda) } \rbrack
\end{aligned}
\end{equation}
where the last line comes from the fact that the absolute value of an expression is always at least as large as the original expression, i.e. $x \leq \abs{x}$.
Looking just at the quantity $\abs{\mathcal{L}(\phi_\theta, \lambda) - \mathcal{L}(\phi, \lambda)}$,
we can expand it as
\begin{equation}
\begin{aligned}
    \abs{\mathcal{L}(\phi_\theta, \lambda) - \mathcal{L}(\phi, \lambda)} &=
    | \mathbb{E}\lbrack \ell_0(s, \phi_\theta) + {\lambda}^\top \boldsymbol{g}(s, \phi_\theta)\rbrack - \\
    &\mathbb{E}\lbrack \ell_0(s, \phi) + {\lambda}^\top \boldsymbol{g}(s, \phi)\rbrack |
\end{aligned}
\end{equation}
Using the triangle inequality, this is upper bounded as
\begin{equation}
\begin{aligned}
    \abs{\mathcal{L}(\phi_\theta, \lambda) - \mathcal{L}(\phi, \lambda)} &\leq
    \abs{\mathbb{E}\lbrack \ell_0(s, \phi_\theta) - \ell_0(s, \phi) \rbrack} +\\
    &\abs{\mathbb{E}\lbrack {\lambda}^\top (\boldsymbol{g}(s, \phi_\theta) - \boldsymbol{g}(s, \phi))\rbrack }
\end{aligned}
\end{equation}
Using H{\"o}lder's inequality, we can create a further upper bound
\begin{equation}
\begin{aligned}
    \abs{\mathcal{L}(\phi_\theta, \lambda) - \mathcal{L}(\phi, \lambda)} &\leq
    \norm{\mathbb{E}\lbrack \ell_0(s, \phi_\theta) - \ell_0(s, \phi) \rbrack}_\infty +\\
    & \norm{\lambda}_1 \norm{\mathbb{E}\lbrack \boldsymbol{g}(s, \phi_\theta) - \boldsymbol{g}(s, \phi)\rbrack }_\infty
\end{aligned}
\end{equation}
where the infinity norm of the scalar value $\mathbb{E}\lbrack \ell_0(s, \phi_\theta) - \ell_0(s, \phi) \rbrack$ is the same as its absolute value.
Using the fact that the infinity norm is convex and Jensen's inequality, we can move the norm inside of the expectation.
\begin{equation}
\begin{aligned}
    \abs{\mathcal{L}(\phi_\theta, \lambda) - \mathcal{L}(\phi, \lambda)} &\leq
    \mathbb{E}\norm{ \ell_0(s, \phi_\theta) - \ell_0(s, \phi)}_\infty +\\
    & \norm{\lambda}_1 \mathbb{E}\norm{ \boldsymbol{g}(s, \phi_\theta) - \boldsymbol{g}(s, \phi)}_\infty
\end{aligned}
\end{equation}
By expectation-wise Lipschitz-continuity of both the loss and constraint functions,
\begin{equation}
\label{eq:param_thm_4}
\begin{aligned}
    \abs{\mathcal{L}(\phi_\theta, \lambda) - \mathcal{L}(\phi, \lambda)} &\leq
    L \mathbb{E}\norm{ \phi_\theta - \phi}_\infty +\\
    & \quad \norm{\lambda}_1 L \mathbb{E}\norm{ \phi_\theta - \phi}_\infty \\
    &= (1 + \norm{\lambda}_1) L \mathbb{E} \norm{ \phi_\theta - \phi}_\infty.
\end{aligned}
\end{equation}
Combining \eqref{eq:param_thm_3} with \eqref{eq:param_thm_4}, we obtain
\begin{equation}
\label{eq:param_thm_5}
\begin{aligned}
    D_\theta^\star &\leq &= \max_{\lambda \geq 0} \lbrack \mathcal{L}(\phi, \lambda) + (\norm{\lambda}_1 + 1) L \min_{\theta \in \Theta}  \mathbb{E}\norm{ \phi_\theta - \phi}_\infty \rbrack 
\end{aligned}
\end{equation}
Since, $\Phi_\theta$ is an $\epsilon$-universal approximation for $\Phi$, we can write $\min_{\theta \in \Theta}  \mathbb{E}\norm{ \phi_\theta - \phi}_\infty \leq \epsilon$. This further reduces \eqref{eq:param_thm_5} to
\begin{equation}
\label{eq:param_thm_6}
    D_\theta^\star \leq \max_{\lambda \geq 0} \lbrack \mathcal{L}(\phi, \lambda) + (\norm{\lambda}_1 + 1)  L \epsilon \rbrack.
\end{equation}
Note that \eqref{eq:param_thm_6} is true for all $\phi$. In particular it must be also true for the $\lambda$ that minimizes the inner value, i.e.
\begin{equation}
\label{eq:param_thm_7}
\begin{aligned}
D_\theta^\star &\leq \max_{\lambda \geq 0} \min_{\phi \in \Phi} \lbrack \mathcal{L}(\phi, \lambda) + (\norm{\lambda}_1 + 1)  L \epsilon \rbrack\\
&= L \epsilon + \max_{\lambda \geq 0} \min_{\phi \in \Phi} \lbrack \mathbb{E}\lbrack \ell_0(s, \phi)\rbrack + \lambda^\top(\mathbb{E}\lbrack \boldsymbol{g}(s, \phi)\rbrack + \boldsymbol{1} L \epsilon) \rbrack
\end{aligned}
\end{equation}
The second half of \eqref{eq:param_thm_7} is actually the solution to the dual problem \eqref{eq:perturbed_dual}. The primal problem is reproduced here for reference,
\begin{equation*}
\begin{aligned}
P_{L\epsilon}^\star = &\min_{\phi \in \Phi} \mathbb{E}\lbrack \ell_0(s, \phi)\rbrack \\
&\text{s.t.  } \mathbb{E}\lbrack \boldsymbol{g}(s, \phi)\rbrack + \boldsymbol{1} L \epsilon \leq 0.
\end{aligned}
\end{equation*}
That is to say, $D_\theta^\star \leq L \epsilon + D_{L\epsilon}^\star$.
The primal problem \eqref{eq:perturbed_primal} is a perturbed version of \eqref{eq:functional_primal}, where all the constraints are tighter by $L \epsilon$. There exists a relationship between the solution of \eqref{eq:perturbed_primal} and \eqref{eq:functional_primal} from \cite[Eq. 5.57]{boyd2004convex}. Treating \eqref{eq:functional_primal} as the perturbed version of \eqref{eq:perturbed_primal} (that tightens the constraints by $-L\epsilon$), the relationship between the two solutions is
\begin{equation}
    P^\star \geq P_{L\epsilon}^\star - {\lambda_{L\epsilon}^\star}^\top \boldsymbol{1} L \epsilon.
\end{equation}
Since both \eqref{eq:functional_primal} and \eqref{eq:perturbed_primal} have zero duality gap by Theorem \ref{thm:ribeiro_zero_duality}, this is the same as
\begin{equation}
\label{eq:param_thm_8}
    D^\star \geq D_{L\epsilon}^\star - \norm{\lambda_{L\epsilon}^\star}_1 L \epsilon.
\end{equation}
Combining \eqref{eq:param_thm_8} with the fact that $D_\theta^\star \leq L\epsilon + D_{L\epsilon}^\star$, the following bound is obtained.
\begin{equation}
    D_\theta^\star \leq D^\star + L\epsilon (\norm{\lambda_{L\epsilon}^\star}_1 + 1)
\end{equation}
This gives us the desired upper bound.

\section{Proof of Lemma \ref{lemma:empirical_error}}
\label{appendix:lemma_proof}

We start by establishing an upper bound on the difference $\abs{D_\theta^\star-D_N^\star}$. By definition of the Dual Problems \eqref{eq:parameterized_dual} and \eqref{eq:empirical_dual}, it follows that 
 $D_\theta^\star  = \min_\theta \mathcal{L}_\theta(\theta, \lambda_\theta^\star)$ and $D_N^\star =  \min_\theta \mathcal{L}_N(\theta, \lambda_N^\star)$. Hence we have that 
 \begin{align}
	D_\theta^\star - D_N^{\star}  &= \min_\theta \mathcal{L}_\theta(\theta, \lambda_\theta^\star) - \min_\theta \mathcal{L}_N(\theta, \lambda_N^\star) \nonumber \\
	& \leq \min_\theta \mathcal{L}_\theta(\theta, \lambda_\theta^\star) - \min_\theta \mathcal{L}_N(\theta, \lambda_\theta^\star), \label{eq:upper2} 
\end{align}

 where the inequality follows from the fact that $\lambda_N^\star$ maximizes the function $d_N(\lambda) = \min_\theta\mathcal{L}_N(\theta, \lambda)$. Thus, any other $\lambda$, in particular $\lambda_\theta^\star$ results in a value that is less than or equal to $D_N^\star$. Let  $\theta_N^\star(\lambda) =\argmin_{\theta\in\Theta} \mathcal{L}_N(\theta,\lambda) $. Substituting by this definition and using the definition of minimum, \eqref{eq:upper2} can be further upper bounded by 
\begin{equation}\label{eq:upper_bound}
	D_\theta^\star - D_N^{\star}   \leq \mathcal{L}_\theta(\theta_N^\star(\lambda_\theta^\star), \lambda_\theta^\star) - \mathcal{L}_N(\theta_N^\star(\lambda_\theta^\star), \lambda_\theta^\star). 
\end{equation}
We set now to establish a similar lower bound. Analogous to the step for the upper bound, we can use the definition of the Dual problem to lower bound $D_\theta^\star - D_N^\star $

\begin{equation}
	D_\theta^\star - D_N^\star \geq \min_\theta \mathcal{L}_\theta(\theta, \lambda_N^\star) - \min_\theta \mathcal{L}_N(\theta, \lambda_N^\star) \label{eq:lower2}. 
\end{equation}
Likewise, define $\theta^\star(\lambda) =\argmin_{\theta\in\Theta} \mathcal{L}_\theta(\theta,\lambda) $ and further upper bound \eqref{eq:lower2} as
\begin{equation}
	D_\theta^\star - D_N^\star \geq \mathcal{L}_\theta(\theta^\star(\lambda_N^\star), \lambda_N^\star) - \mathcal{L}_N(\theta^\star(\lambda_N^\star), \lambda_N^\star). \label{eq:lower_bound}
\end{equation}

\balance

Using the upper and lower bounds for $D_\theta^\star-D_N^\star$ derived in \eqref{eq:upper_bound} and \eqref{eq:lower_bound}, we can bound the absolute value of the difference by the maximum of the absolute values of the lower and upper bounds
\begin{equation}
\label{eq:bound_max}
|D_\theta^\star - D_N^{\star} | \leq 
\max
\begin{cases}
|\mathcal{L}_\theta(\theta^\star(\lambda_N^\star), \lambda_N^\star) - \mathcal{L}_N(\theta^\star(\lambda_N^\star), \lambda_N^\star) |\\
|\mathcal{L}_\theta(\theta_N^\star(\lambda_\theta^\star), \lambda_\theta^\star) - \mathcal{L}_N(\theta_N^\star(\lambda_\theta^\star), \lambda_\theta^\star) |
\end{cases}
\end{equation}
A conservative upper bound for the previous expression is 
\begin{equation}
|D_\theta^\star - D_N^{\star} | \leq \abs{\sup_{\theta\in\Theta,\lambda \in \mathbb{R}_+^K} \mathcal{L}_\theta(\theta, \lambda) - \mathcal{L}_N(\theta, \lambda) }
\end{equation}
This completes the proof of the Lemma.

% \textcolor{red}{
% TODO: replace all begin{align} with begin{equation}begin{aligned}
% TODO: replace all norms with $\norm{x}$ or $\abs{x}$
% TODO: put brackets in each expectations using $\lbrack \rbrack$
% TODO: change all transposes into $a^\top$
% TODO: make sure there are no blambda or maxlambda
% TODO: get rid of the word "like"
% TODO: update all tables and figures and their captions
% TODO: make sure to use Figure instead of Fig.
% TODO: replace all optimal variables with $x^\star$ instead of $x^*$
% TODO: make sure L in lagrangian function is using mathcal
% }

\bibliographystyle{IEEEtran}
\bibliography{IEEEabrv,bib.bib}

% that's all folks
\end{document}